\documentclass{article}

\usepackage{microtype}
\usepackage{graphicx}
\usepackage{subcaption}
\usepackage{booktabs} % for professional tables

\usepackage[accepted]{icml2026}
\usepackage{amsmath}
\usepackage{amssymb}
\usepackage{mathtools}
\usepackage{amsthm}
\usepackage{booktabs}
\usepackage{multirow}
\usepackage{makecell}
\usepackage{array}
\usepackage{cancel}

\usepackage{algorithm}
\usepackage{algorithmicx}
\usepackage{algpseudocode} 

\usepackage[table]{xcolor}
\usepackage{pifont}
\usepackage[
  colorlinks=true,
  linkcolor=cyan,
  citecolor=cyan
]{hyperref}

\usepackage[capitalize,noabbrev]{cleveref}

\theoremstyle{plain}
\newtheorem{theorem}{Theorem}[section]
\newtheorem{proposition}[theorem]{Proposition}
\newtheorem{lemma}[theorem]{Lemma}
\newtheorem{corollary}[theorem]{Corollary}
\theoremstyle{definition}

\theoremstyle{remark}
\newtheorem{remark}[theorem]{Remark}

\usepackage[textsize=tiny]{todonotes}

\usepackage{xcolor}
\usepackage{mdframed}

\newmdenv[
  backgroundcolor=cyan!95!teal!5,
  linecolor=blue!40!teal,
  linewidth=1pt,
  roundcorner=6pt,
  innertopmargin=3pt,
  innerbottommargin=5pt,
  innerrightmargin=10pt,
  innerleftmargin=10pt,
  frametitle=Why Lagrangian-guided diffusion is problematic?,
  frametitlefont=\bfseries,
  frametitlebackgroundcolor=blue!10,
  skipabove=10pt,
  skipbelow=10pt
]{remarkbox}

\newmdenv[
  backgroundcolor=cyan!95!teal!5,
  linecolor=blue!40!teal,
  linewidth=1pt,
  roundcorner=6pt,
  innertopmargin=3pt,
  innerbottommargin=5pt,
  innerrightmargin=10pt,
  innerleftmargin=10pt,
  frametitle=Benefits of the augmented Lagrangian for diffusion.,
  frametitlefont=\bfseries,
  frametitlebackgroundcolor=blue!10,
  skipabove=10pt,
  skipbelow=10pt
]{remarkbox1}

\newmdenv[
  backgroundcolor=cyan!95!teal!5,
  linecolor=blue!40!teal,
  linewidth=1pt,
  roundcorner=6pt,
  innertopmargin=3pt,
  innerbottommargin=5pt,
  innerrightmargin=10pt,
  innerleftmargin=10pt,
  frametitlefont=\bfseries,
  frametitlebackgroundcolor=blue!10,
  skipabove=10pt,
  skipbelow=10pt
]{corequestion}

\newmdenv[
  backgroundcolor=cyan!5,
  linecolor=blue!40!pink,
  linewidth=0.5pt,
  roundcorner=4pt,
  innertopmargin=6pt,
  innerbottommargin=6pt,
  innerleftmargin=8pt,
  innerrightmargin=8pt,
]{proofpropwithframe}

\newmdenv[
  backgroundcolor=cyan!5,
  linecolor=blue!40!pink,
  linewidth=0.5pt,
  roundcorner=4pt,
  innertopmargin=6pt,
  innerbottommargin=6pt,
  innerleftmargin=8pt,
  innerrightmargin=8pt,
]{prooftheoremwithframe}

\usepackage{enumitem}

\icmltitlerunning{Augmented Lagrangian-Guided Diffusion}

\begin{document}

\twocolumn[
  \icmltitle{How Does the Lagrangian Guide Safe Reinforcement Learning through Diffusion Models?}

  % It is OKAY to include author information, even for blind submissions: the
  % style file will automatically remove it for you unless you've provided
  % the [accepted] option to the icml2026 package.

  % List of affiliations: The first argument should be a (short) identifier you
  % will use later to specify author affiliations Academic affiliations
  % should list Department, University, City, Region, Country Industry
  % affiliations should list Company, City, Region, Country

  % You can specify symbols, otherwise they are numbered in order. Ideally, you
  % should not use this facility. Affiliations will be numbered in order of
  % appearance and this is the preferred way.
  \icmlsetsymbol{equal}{*}

  \begin{icmlauthorlist}
    \icmlauthor{Xiaoyuan Cheng}{equal,ucl}
    \icmlauthor{Wenxuan Yuan}{equal,icl}
    \icmlauthor{Boyang Li}{ucsd}
    \icmlauthor{Yuanchao Xu}{kyoto}
    \icmlauthor{Yiming Yang}{ucl}
    \icmlauthor{Hao Liang}{kcl}
    \icmlauthor{Bei Peng}{shef}
    %\icmlauthor{}{sch}
    \icmlauthor{Robert Loftin}{shef}
    \icmlauthor{Zhuo Sun}{shf}
    \icmlauthor{Yukun Hu}{ucl}
    %\icmlauthor{}{sch}
    %\icmlauthor{}{sch}
  \end{icmlauthorlist}

  \icmlaffiliation{ucl}{University College London}
  \icmlaffiliation{icl}{Imperial College London}
  \icmlaffiliation{ucsd}{University of California, San Diego}
  \icmlaffiliation{kyoto}{Kyoto University}
  \icmlaffiliation{kcl}{King's College London}
  \icmlaffiliation{shef}{University of Sheffield}
  \icmlaffiliation{shf}{Shanghai University of Finance and Economics}

    \vspace{-0.3em}
% \icmlauthorcontributions
\begin{center}
$*$ \textit{Indicates the Equal Contributing Authors}
\end{center}

    \vspace{-0.3em}
% \icmlauthorcontributions
% \begin{center}

% \end{center}

  \icmlcorrespondingauthor{Xiaoyuan Cheng}{\url{ucesxc4@ucl.ac.uk}}
  \icmlcorrespondingauthor{Yukun Hu}{\url{yukun.hu@ucl.ac.uk}}
  % \icmlcorrespondingauthor{Firstname2 Lastname2}{first2.last2@www.uk}

  % You may provide any keywords that you find helpful for describing your
  % paper; these are used to populate the "keywords" metadata in the PDF but
  % will not be shown in the document
  \icmlkeywords{Machine Learning, ICML}

  \vskip 0.3in
]

% this must go after the closing bracket ] following \twocolumn[ ...

% This command actually creates the footnote in the first column listing the
% affiliations and the copyright notice. The command takes one argument, which
% is text to display at the start of the footnote. The \icmlEqualContribution
% command is standard text for equal contribution. Remove it (just {}) if you
% do not need this facility.

% Use ONE of the following lines. DO NOT remove the command.
% If you have no special notice, KEEP empty braces:
\printAffiliationsAndNotice{}  % no special notice (required even if empty)
% Or, if applicable, use the standard equal contribution text:
% \printAffiliationsAndNotice{\icmlEqualContribution}

\begin{abstract}
Diffusion policy sampling enables reinforcement learning (RL) to represent multimodal action distributions beyond suboptimal unimodal Gaussian policies. However, existing diffusion-based RL methods primarily focus on offline setting for reward maximization, with limited consideration of safety in online settings.
To address this gap, we propose \textbf{Augmented Lagrangian-Guided Diffusion} (\textbf{ALGD}), a novel algorithm for off-policy safe RL. By revisiting optimization theory and energy-based modeling, we show that the instability of primal–dual methods arises from the non-convex Lagrangian landscape. In diffusion-based safe RL, the Lagrangian can be interpreted as an energy function guiding the denoising dynamics; counter-intuitively, direct usage destabilizes both policy generation and training.
ALGD resolves this issue by introducing an augmented Lagrangian that locally convexifies the energy landscape, yielding a stabilized policy generation and training, without altering the distribution of optimal policy. Theoretical analysis and extensive experiments demonstrate that ALGD is both theoretically grounded and empirically effective, achieving strong and stable performance across diverse environments. See project page: \url{https://github.com/Wenxuan52/ALGD}
\end{abstract}

\vspace{-2em}

\section{Introduction}
Reinforcement learning (RL) has achieved tremendous successes in various domains \cite{silver2016mastering, ibarz2021train, guo2025deepseek}. In many practical situations, direct interaction with the environment is expensive and potentially hazardous \cite{garcia2015comprehensive, gu2024review}. To address the safety issues, safe RL incorporates safety considerations by enforcing constraints that keep certain risk measures below predefined thresholds \cite{achiam2017constrained}, while optimizing the expected reward. Recently, safe RL has gained increasing attention as a crucial step toward deploying RL in real-world applications \cite{dulac2021challenges}. From an optimization perspective \cite{zheng2024safe}, existing safe RL methods can be broadly categorized into \textit{primal–dual} and \textit{hard-constrained} methods.

\begin{table*}[ht]
    \centering
    \footnotesize
    \caption{
        Comparison of representative safe RL methods and our method.
        Primal-dual methods, typically on-policy in online training, suffer from instability due to oscillating dual updates.
        Hard-constrained approaches require accurate exploration of the maximal safe set, which is often difficult and conservative.
        Most diffusion-based RL methods are limited to offline settings and poorly explore safety constraints in online settings.
        Our \textbf{ALGD} falls within the primal-dual family but enables online safe RL (\textit{off-policy}) by revisiting an energy-based optimization perspective, bridging the gap between expressive generative policies and stable constrained optimization.
    }
    \vspace{0.2em}

    \renewcommand{\arraystretch}{0.95}
    \setlength{\tabcolsep}{6pt}

    \begin{tabular*}{\textwidth}{@{\extracolsep{\fill}}lccccc}
        \toprule
        \textbf{Methods} &
        \textbf{Online} &
        \textbf{Conservative} &
        \textbf{Reward} &
        \textbf{Training Stability} &
        \textbf{Policy Distribution} \\
        \midrule

        \rowcolor{gray!5}
        \multicolumn{1}{l}{Primal-Dual}
            & \ding{51} & \ding{55} & High & \ding{55} & Gaussian \\

        \multicolumn{1}{l}{Hard-Constraint}
            & \ding{51} & \ding{51} & Low & \ding{51} & Deterministic/Gaussian \\

        \rowcolor{gray!5}
        \multicolumn{1}{l}{Diffusion-Based}
            & \ding{55} & -- & High & \ding{51} & Multimodal \\

        \midrule

        \rowcolor{cyan!12}
        \multicolumn{1}{l}{\textbf{ALGD (ours)}}
            & \ding{51} & \ding{55} & \textbf{High} & \ding{51} & \textbf{Multimodal} \\

        \bottomrule
    \end{tabular*}

    \label{tab:comparison_safe_rl}
\end{table*}

% the alternating optimization  can amplify these errors. Specifically, if the estimated cost violates the constraint threshold while the true cost does not (or vice versa), the Lagrange multiplier is updated in the wrong direction, resulting in a misleading penalty term in the primal objective. Consequently, the policy may continue to improve in reward, but the cost signal becomes brittle and unreliable due to compounding estimation errors and alternating updates. 

\textbf{Primal-Dual Methods.} The first class, primal–dual methods, enforces safety in expectation by introducing Lagrange multipliers to relax the constraints, enabling the agent to balance reward maximization and constraint satisfaction during training \cite{chow2018risk, tessler2018reward, ding2020natural, sootla2022saute, yang2022constrained, liu2022constrained, liu2023towards}. In general, these methods alternate between the primal update for policy optimization and the dual update for Lagrange multiplier adjustment. While they provide solid theoretical guarantees for converging to constraint-satisfying policies \cite{achiam2017constrained, zhang2020first, altman2021constrained}, they often exhibit severe instability in practice \cite{so2023solving, zhang2025discrete}. 
This instability primarily arises from the tight coupling between cost estimation and policy optimization during online exploration. In particular, during the early stages of training, inaccurate cost estimates can drive updates of a unimodal Gaussian policy toward suboptimal regions. Due to its limited expressive capacity, such a policy is unable to move beyond these reinforced suboptimal modes, leading to biased policy updates and unstable dual variable updating.
This issue is further exacerbated in \textit{off-policy} settings \cite{wu2024off}, where distributional shift amplifies bias and variance in cost estimation, resulting in oscillatory dual variables and unstable policy optimization. As a consequence, most existing primal-dual methods are on-policy by design and require a large number of interaction samples to converge, which is problematic in safety-critical applications where environment interactions may be inherently risky.

\textbf{Hard-Constrained Methods.} To mitigate such instability in primal-dual methods, various hard-constrained approaches have been proposed, including Lyapunov-based methods \cite{chow2018lyapunov, zhang2024learning}, control barrier functions \cite{qin2022sablas, ma2021model}, Hamilton–Jacobi (HJ) reachability analysis \cite{ganai2023iterative, ganai2024hamilton}, safety shielding \cite{wagener2021safe}, and projection-based techniques \cite{lin2024projection}. Unlike primal–dual methods that enforce safety in expectation, hard-constrained methods theoretically guarantee state-wise constraint satisfaction during execution. Specifically, these methods usually define a sublevel set of the constraint function as a safe set (also referred to as a control-invariant set in control theory) \cite{yu2022reachability}, which ensures that the agent’s behavior remains within this region \cite{so2023solving, qin2024feasible, zhang2025discrete, zhang2025solving, ganai2024hamilton}. However, since the largest safe (or control-invariant) set is usually unknown \cite{choi2025data}, online RL with parameterized policies (e.g., Gaussian or deterministic) may fail to approximate it accurately. 
% \zs{in other words, the policy model is outside the density family with predetermined parametrization} 
Consequently, conservative policies derived from overly restricted safe sets tend to limit exploration and reduce achievable rewards. 

\textbf{Diffusion-based RL.} Beyond the convex optimization perspective, an emerging line of research explores reinforcement learning through diffusion models, as the generated policies can represent distributions beyond the Gaussian assumption \cite{janner2022planning, wang2022diffusion, wang2024diffusion}. However, because the generative nature of diffusion policies relies on fixed offline data distributions \cite{kang2023efficient, uehara2024understanding, park2025flow}, most existing diffusion-based approaches remain confined to the offline RL setting. Recent studies have further investigated safe offline policy generation using diffusion models \cite{xiao2023safediffuser, cheng2025safe}, primarily focusing on learning policies from static datasets. While a few works have begun exploring the connection between online RL and diffusion models \cite{ding2024diffusion, psenka2023learning, ma2025efficient}, safety considerations remain largely unexplored in such settings (see a more detailed review in Appendix~\ref{append:diffusion_rl_literature}).

In light of the limitations of existing approaches (see Table~\ref{tab:comparison_safe_rl}), we turn to diffusion models for their superior expressiveness. However, the key challenge lies in how to seamlessly introduce safety constraints into the diffusion policy during online exploration without compromising training stability and optimality. In this work, we propose Augmented Lagrangian-Guided Diffusion (ALGD), a principled framework that reformulates safe RL as guided diffusion. 
Our contributions are threefold: (1) We provide a novel reformulation of safe RL in the diffusion framework by interpreting the Lagrangian objective as the energy function governing the reverse diffusion process. Through this formulation, we reveal a counter-intuitive but fundamental limitation: directly applying the standard Lagrangian induces a highly nonconvex energy landscape, resulting in unstable score fields and inconsistent Boltzmann distributions.
(2) We theoretically show how this issue can be resolved via an augmented Lagrangian formulation that locally convexifies the energy landscape. We prove that this modification stabilizes the diffusion dynamics without altering the optimal policy distribution of the original constrained problem.
(3) Based on this insight, we develop a novel algorithm and provide a discrepancy analysis that formally characterizes the gap between the learned diffusion policy and the ideal constrained solution, demonstrating the feasibility of our approach.
Empirically, ALGD achieves competitive returns while consistently reducing constraint violations compared to prior safe RL methods.

% Based on this perspective, we find that directly applying the conventional primal–dual formulation within diffusion-based online RL leads to severe learning instability.
% When we apply score matching, this instability can be understood as arising from the shape of the score field (i.e., the gradient of the underlying energy landscape). In the conventional primal–dual formulation, this field becomes highly irregular and sensitive to small changes in the dual variable, leading to unstable learning. By contrast, the augmented Lagrangian formulation smooths and locally convexifies this score field, providing a stable diffusion dynamics. 

\section{Preliminaries} \label{sec:preliminaries}
Following the notation in \citet{psenka2023learning}, we start from a general nonlinear controlled stochastic dynamics with a continuous action space. In this section, we first introduce key notations, followed by the formal problem formulation.

\textbf{Notation.}
We denote the environment time step by $t$.
The state and action spaces are compact sets $\mathcal{S} \subseteq \mathbb{R}^n$ and
$\mathcal{A} \subseteq \mathbb{R}^m$, respectively.
Following standard notation in stochastic calculus, $ds$ denotes an infinitesimal
increment of a variable $s$, and $\nabla f$ denotes the gradient of a function $f$.
The reward function is $r: \mathcal{S} \times \mathcal{A} \to \mathbb{R}$, and the cost
function is $c: \mathcal{S} \to \mathbb{R}_{\ge 0}$.
The policy $\pi(a|s)$ defines a conditional distribution over actions given state $s$.
The initial state $s_0$ is drawn from distribution $d_0$.
We use $\gamma \in [0,1]$ to denote the discount factor and $\beta > 0$ the temperature
parameter for policy parametrization.
Throughout the paper, superscripts (e.g., $a^{\tau}$) index diffusion steps, while
subscripts (e.g., $s_t, a_t$) denote environment time steps.

% \textbf{Markov decision processes.} 

\textbf{Diffusion Model.} 
In this work, we consider the variance exploding (VE) stochastic differential equation (SDE)~\cite{song2020score} in a continuous action space, which can be written as~\citep{akhound2024iterated}
\[
da^\tau = \sqrt{\frac{d\sigma^2(\tau)}{d\tau}}\, dB^\tau,
\]
where $B^\tau$ denotes the standard Brownian motion and $\sigma(\tau)$ is a monotonically increasing noise scale. 
As time progresses, the prior policy distribution $\pi^0(a|s)$ is progressively convolved with Gaussian noise of increasing variance $\sigma^2(\tau)$, yielding the marginal distribution
\begin{equation}
\begin{split}
    \pi^{\tau}(a^\tau|s) & = \int \pi^0(a^0|s)\, \mathcal{N}\big(a^\tau; a^0, \sigma^2(\tau) I\big)\, da^0 ,
    \label{eq:gaussian_mollifier}
\end{split}
\end{equation}
which smoothly transitions from the target distribution $\pi(a|s)$ to an isotropic Gaussian distribution as $\tau$ increases. 
The VE SDE thus describes a diffusion process where the variance of $a^\tau$ increases over time, gradually transforming the data distribution into an isotropic Gaussian.

In the reverse-time formulation, the dynamics follow
\begin{equation}
    da^\tau = \left[ -\frac{d\sigma^2(\tau)}{d\tau}\, \phi(s, a^\tau, \tau) \right] d\tau 
+ \sqrt{\frac{d\sigma^2(\tau)}{d\tau}}\, dB^\tau, 
\label{eq:langevin_dynamics}
\end{equation}
where $\phi(s, a^\tau, \tau)$ approximates the score function 
$\nabla_{a} \log \pi^\tau(a^\tau|s)$ for denoising and recovering the clean data distribution~\cite{song2020score}.

% \textbf{Stochastic Dynamics.} For the convenience of theoretical analysis, we consider nonlinear stochastic system defined over continuous state and action sets:
% \begin{align}
%     da^t &= \phi(s,a,t)\, dt + \Sigma_a(s,a,t)\, dB_t^a \qquad \text{(Langevin dynamics)} \label{eq:stochastic_a}
% \end{align}
% Here, \( f(s,a) \) denotes the drift term, and \( \phi(s,a,t) \) is the score function for generating the intractable policy distribution \( \pi(a|s) \). The functions \( \Sigma_s(s,a) \) and \( \Sigma_a(s,a,t) \) are positive semi-definite diffusion matrices, and \( B_t^s \), \( B_t^a \) are standard Brownian motions.
% \begin{figure}[h]
%     \centering
%     \includegraphics[width=0.98\linewidth]{energy_landscape_comparison.pdf}
%     \caption{Illustration of energy landscape of ordinary Lagrangian and augmented Lagrangian with fix dual variable.}
%     \label{fig:energy_landscape}
% \end{figure}

\paragraph{Problem Formulation.} The objective of safe RL is to optimize expected reward while ensuring that the accumulated cost remains below a specified threshold $h$ in this case. Formally, the problem is:
\begin{equation} \label{eq:problem_formulation}
\begin{split} 
    \max_{\pi} \quad & \mathbb{E}_{s_0 \sim d_0}[ V^{\pi}(s_0)] \\ 
    \text{s.t.} \quad & \mathbb{E}_{s_0 \sim d_0}[V_c^{\pi}(s_0)] \leq h, 
\end{split}
\end{equation}
where $V^{\pi}(s_0) = \mathbb{E}_{a_0 \sim \pi(\cdot |s_0)} [Q^{\pi}(s_0, a_0)] \coloneqq \mathbb{E}_{\pi} \big[ \sum_{t=0}^\infty \gamma^t r(s_t, a_t) | s_0\big] $ is the value function, and $V_c^{\pi} (s_0)  = \mathbb{E}_{a_0 \sim \pi(\cdot |s_0)} [Q_c^{\pi}(s_0, a_0)] \coloneqq \mathbb{E}_{\pi} \big[\sum_{t=0}^\infty \gamma^t c(s_t)| s_0] $ is the cost value function. We define the standard Lagrangian (hereafter abbreviated as Lagrangian) at the state–action level as:
\[
    \mathcal{L}(s_0, a_0, \lambda)
    \coloneqq  - Q^{\pi}(s_0, a_0)
      + \lambda \big( Q_c^{\pi}(s_0, a_0) - h \big),
\]
where $\lambda\geq 0$ is the Lagrangian multiplier.  The optimization of \eqref{eq:problem_formulation} is carried out 
\begin{equation}
    \max_{\lambda \ge 0} \min_{\pi} \mathbb{E}_{\substack{s_0 \sim d_0, \\ a_0 \sim \pi(\cdot|s_0)}} [ \mathcal{L}(s_0, a_0, \lambda) ]. \label{eq:lagrangian_form}
\end{equation}

Here, we assume that the constrained problem is feasible. In practice, solving \eqref{eq:lagrangian_form} over continuous action spaces commonly adopts entropy-regularized primal-dual optimization, yielding a stochastic policy distribution. Under this formulation, three main limitations arise in primal-dual methods.

(\textcolor{cyan}{L1}) In entropy-regularized policy optimization (see Appendix~\ref{append:Boltzmann Policy via variational form}), the policy is parameterized as a \textit{Boltzmann distribution} with the Lagrangian $\mathcal{L}$ serving as the energy:
\begin{equation}
    \pi(a|s) = 
    \frac{\exp\!\left(- \frac{\mathcal{L}(s,a,\lambda)}{\beta}\right)}{Z}, 
    \label{eq:boltzmann_distribution}
\end{equation}
where $\beta>0$ is the temperature factor and $Z = \int_a \exp\!\left(- \frac{\mathcal{L}(s,a,\lambda)}{\beta}\right) da$ is the (intractable) partition function. In previous safe RL literature, this Boltzmann policy distribution is often  approximated as a Gaussian distribution for tractability, which deviates from the true one and introduces additional bias and a mismatch between theory and implementation.

(\textcolor{cyan}{L2}) The dual variable $\lambda$ is updated using inaccurate cost value estimation evaluated under a misspecified policy, resulting in unstable dual updates due to a mismatch with the true Boltzmann distribution.

(\textcolor{cyan}{L3}) Since the Lagrangian $\mathcal{L}(s,a,\lambda)$ in \eqref{eq:boltzmann_distribution} directly shapes the energy landscape, oscillations of $\lambda$ induce large fluctuations in Boltzmann distribution, leading to instability. 
% Together, these issues form a feedback loop among cost estimation, dual variable updates, and approximate policy representation, ultimately leading to unstable and unreliable learning dynamics in safe RL.

We pose the following research questions:
% \begin{center}
% \textit{Are diffusion models capable of addressing the key limitations in safe RL, and what designs are required to overcome the challenges that arise from their integration?}
% \end{center}
\begin{center}
\fbox{%
\begin{minipage}{0.97\linewidth}
\centering
\small
\setlength{\baselineskip}{0.95\baselineskip}
\textit{Are diffusion models capable of addressing the key limitations in safe RL, and what designs are required to overcome the challenges that arise from their integration?}
\end{minipage}}
\end{center}

To explore this, we first investigate the fundamental connection between diffusion models and safe RL, then analyze the key difficulties that arise when combining them, and finally present how to overcome those challenges to benefit safe RL both theoretically and algorithmically.

\section{Method}
\label{sec:method}
%The method section is organized in three parts.
We first uncover the connections between diffusion models and safe RL and their challenges.
We then formulate the Augmented Lagrangian-Guided Diffusion (ALGD) framework to resolve these issues, and finally introduce a practical algorithm based on ALGD for online safe RL.

\subsection{When Diffusion Meets Safe RL}

Firstly, we reformulate the policy gradient of the safe RL \eqref{eq:lagrangian_form} as 
\begin{equation}\label{eq:policy_gradient}
    \mathbb{E}_{\substack{s_0 \sim d_0\\ a_0 \sim \pi_\theta}}
    \Big[
    - \sum_{t=0}^\infty 
    \mathcal{L}(s_t, a_t, \lambda)
    \big(
    \underbrace{
    \sum_{\tau = K}^{1} 
    \nabla_{\theta} \log 
    \pi_{\theta}^{\tau}
    ( a_t^{\tau-1}| a_t^{\tau}, s_t )
    }_{\text{policy gradient with refinement}}
    \big)
    \Big].
\end{equation}
where each intermediate policy $\pi_\theta^{\tau-1}(a_t^{\tau-1}|a_t^{\tau}, s_t)$ refines the previous step along a reverse diffusion chain.  
The resulting marginal distribution of the final action can be written as
\begin{equation}
    \pi_{\theta}^{0}(a_t^0|s_t) 
    = 
    \int_{a_t^{1:K}} 
    \pi_{\theta}^{K}(a_t^K|s_t) 
    \prod_{\tau = K}^{1} 
    \pi_{\theta}^{\tau-1}(a_t^{\tau-1}|a_t^{\tau}, s_t) 
    da_t^{1:K},
    \label{eq:policy_generation}
\end{equation}
which represents a multi-step refinement process that transforms a Gaussian prior $\pi_\theta^{K} \sim \mathcal{N}(0, \sigma^2(K)I)$ into the policy distribution $\pi_\theta^{0}$ shown in \eqref{eq:boltzmann_distribution}. This iterative refinement is a discrete approximation of the reverse-time SDE defined in \eqref{eq:langevin_dynamics}, where the drift term corresponds to $-\frac{d\sigma^2(\tau)}{d\tau}\, \phi_{\theta}(s, a^{\tau}, \tau)$.  
Under this view, updating the policy gradient in parameter space is equivalent to performing \textit{score matching} with respect to the score function of the reverse-time SDE in \eqref{eq:langevin_dynamics}. This enforces the learned diffusion score field $\phi_\theta(s, a^{\tau}, \tau)$ to align with the instantaneous score $\nabla_{a}\log \pi^{\tau}(a^{\tau}|s)$ that drives the reverse diffusion process.  
Note that the score network shares the same parameter $\theta$ as the policy, since the entire multi-step refinement process is governed by $\phi_\theta(s, a^{\tau}, \tau)$. When this score field is accurately learned, the resulting distribution $\pi_\theta^{0}(a|s)$ converges to the intended Boltzmann distribution~\eqref{eq:boltzmann_distribution}.

Generally, the intermediate score $\nabla_{a}\log \pi^{\tau}(a^{\tau}|s)$ is intractable to compute,  as the intermediate policy distribution $\pi^{\tau}$ lacks a closed-form expression during the reverse diffusion process. \textit{Since the score estimation in \citet{psenka2023learning} is misspecified except $\tau=0$ \citep{dong2025maximum}, we propose to calculate the exact score function based on the Lagrangian $\mathcal{L}(s,a,\lambda)$.}

\begin{proposition}[Lagrangian-guided score function] \label{prop:Optimality Condition in score matching}
Consider the VE SDE with the reverse-time formulation in \eqref{eq:langevin_dynamics}, the Lagrangian $\mathcal{L}(s,a,\lambda)$ is globally Lipschitz continuous over the entire state-action spaces 
$\mathcal{S}$ and $\mathcal{A}$. 
By fixing the Lagrange multiplier $\lambda$ and sampling trajectories $(s_t, a_t)$ starting from $(s_0, a_0)$, 
the optimization objective in \eqref{eq:lagrangian_form} 
attains its optimal value when the score field $\phi_*(s,a^\tau,\tau)$ equals
\begin{equation}
    \mathbb{E}_{a^{0|\tau}}
    \bigg[ - \colorbox{red!15}{$
\frac{ \exp\!\left(- \frac{\mathcal{L}(s,a^{0|\tau},\lambda)}{\beta}\right) }
{\mathbb{E}_{a^{0|\tau}}
    \big[ \exp\!\left(- \frac{\mathcal{L}(s,a^{0|\tau},\lambda)}{\beta}\right) \big]}
$}
 \frac{ \nabla_a \mathcal{L}(s, a^{0|\tau},\lambda) }{\beta} \bigg],\label{eq:energy_guided_score}
\end{equation}
with the posterior distribution $a^{0|\tau} \sim \mathcal{N}(a^{\tau}, \sigma^2(\tau) I)$ and the first part is the normalized weight. Therefore, the optimal score function $\phi_*$ aligns with the Lagrangian gradient in the action space (see proof and discussion in Appendix~\ref{append:proof_of_prop}).
\end{proposition}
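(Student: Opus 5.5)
The plan has two stages. First, with $\lambda$ fixed, I identify the optimal terminal policy as the Boltzmann distribution \eqref{eq:boltzmann_distribution}. Second, I push that distribution through the VE forward process \eqref{eq:gaussian_mollifier} and compute its score in closed form.

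For the first stage, I use the entropy-regularized version of the inner minimization in \eqref{eq:lagrangian_form}, as described in (\textcolor{cyan}{L1}) and Appendix~\ref{append:Boltzmann Policy via variational form}. For each state $s$, I minimize $\mathbb{E}_{a\sim\pi}[\mathcal{L}(s,a,\lambda)] - \beta\,\mathcal{H}(\pi(\cdot|s))$ over densities on the compact set $\mathcal{A}$. This is a Gibbs variational problem. Writing the objective as $\beta\,\mathrm{KL}(\pi\,\|\,\pi_*) - \beta\log Z$ shows that the unique minimizer is $\pi_*(a|s)\propto \exp(-\mathcal{L}(s,a,\lambda)/\beta)$. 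Global Lipschitz continuity of $\mathcal{L}$ on the compact $\mathcal{A}$ makes $Z$ finite and positive, so $\pi_*$ is well defined.

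Next I connect this minimizer to the score field. The reverse SDE \eqref{eq:langevin_dynamics} transports $\mathcal{N}(0,\sigma^2(K)I)$ exactly to $\pi^0$ whenever its drift uses the true score of the forward marginals. So the objective attains its optimum exactly when $\phi_*(s,a^\tau,\tau)=\nabla_a\log\pi_*^\tau(a^\tau|s)$, where $\pi_*^\tau=\pi_*\ast\mathcal{N}(0,\sigma^2(\tau)I)$. This is the score-matching equivalence argued around \eqref{eq:policy_gradient}. Holding $\lambda$ fixed and sampling $(s_t,a_t)$ from $(s_0,a_0)$ lets me treat the problem state-wise.

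For the second stage, I compute the score of the mollified density. By \eqref{eq:gaussian_mollifier},
\[
\pi_*^\tau(a^\tau|s)=\frac{1}{Z}\int \exp\!\big(-\mathcal{L}(s,a^0,\lambda)/\beta\big)\,\mathcal{N}(a^\tau;a^0,\sigma^2(\tau)I)\,da^0 .
\]
The Gaussian kernel is symmetric in $(a^0,a^\tau)$, so this equals $Z^{-1}\,\mathbb{E}_{a^{0|\tau}\sim\mathcal{N}(a^\tau,\sigma^2(\tau)I)}\big[\exp(-\mathcal{L}(s,a^{0|\tau},\lambda)/\beta)\big]$. I then reparametrize $a^{0|\tau}=a^\tau+\sigma(\tau)\epsilon$, so that $a^\tau$ enters only through the argument of the exponential. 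Differentiating under the expectation then gives
\[
\nabla_{a}\pi_*^\tau = Z^{-1}\,\mathbb{E}\big[-\exp(-\mathcal{L}/\beta)\,\nabla_a\mathcal{L}/\beta\big].
\]
Dividing by $\pi_*^\tau$ cancels $Z$ and leaves exactly the self-normalized weight in \eqref{eq:energy_guided_score} multiplying $-\nabla_a\mathcal{L}/\beta$.

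The main obstacle I expect is justifying the interchange of gradient and expectation. I would handle it by dominated convergence. The Lipschitz assumption bounds $\nabla_a\mathcal{L}$ almost everywhere by a constant $L$, and it also bounds $\exp(-\mathcal{L}/\beta)$ on bounded sets. If $\mathcal{L}$ is extended beyond $\mathcal{A}$, the Lipschitz bound gives at most $e^{L|\epsilon|}$ growth, which is integrable against the Gaussian, so a dominating function exists. Rademacher's theorem makes $\nabla_a\mathcal{L}$ meaningful almost everywhere. A secondary subtlety is that the posterior is stated as $\mathcal{N}(a^\tau,\sigma^2 I)$ rather than the true Bayesian posterior. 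This is consistent because the $\pi_*$-dependence is entirely absorbed into the importance weight, which makes the weighted Gaussian expectation equal to the true posterior expectation $\mathbb{E}_{a^0\sim\pi_*(a^0|a^\tau)}[-\nabla\mathcal{L}/\beta]$. I would state this identity explicitly.
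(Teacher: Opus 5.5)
Your route is the same as the paper's. The paper also (i) derives the Boltzmann form from the per-state entropy-regularized problem, though only through a first-order functional-derivative condition; your identity $\mathbb{E}_\pi[\mathcal{L}]-\beta\mathcal{H}(\pi)=\beta D_{\mathrm{KL}}(\pi\,\|\,\pi_*)-\beta\log Z$ is stronger, since it gives a unique global minimizer. (ii) It ties optimality of the objective to the reverse SDE being driven by the true score of the mollified target, at the same heuristic level as you do. (iii) It computes that score by moving the gradient onto $\pi^0$ (``differentiation commutes with convolution'', which is your reparametrization $a^{0|\tau}=a^\tau+\sigma(\tau)\epsilon$), then substitutes $\nabla_a\pi^0=-\pi^0\nabla_a\mathcal{L}/\beta$ and cancels $Z$. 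Your dominated-convergence justification and your identification of the self-normalized Gaussian expectation with the true posterior mean of $-\nabla_a\mathcal{L}/\beta$ are useful additions that the paper omits.

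One step does not go through as you have set it up, and the paper's proof has the same defect: its convolution step ignores the jump of $\pi^0$ at $\partial\mathcal{A}$. In Stage 1 you pose the Gibbs problem on the compact $\mathcal{A}$, which is what makes $Z<\infty$. So $\pi_*=Z^{-1}e^{-\mathcal{L}/\beta}\mathbf{1}_{\mathcal{A}}$, and the correct identity is $\pi_*^\tau(a^\tau)=Z^{-1}\mathbb{E}_\epsilon\big[e^{-\mathcal{L}(s,a^\tau+\sigma\epsilon,\lambda)/\beta}\,\mathbf{1}_{\mathcal{A}}(a^\tau+\sigma\epsilon)\big]$.

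Extending $\mathcal{L}$ beyond $\mathcal{A}$, as you suggest, does not justify the formula; it changes the density being mollified. If you keep the indicator, your reparametrized derivative also hits $\mathbf{1}_{\mathcal{A}}$. Its distributional gradient adds to $\nabla_a\pi_*^\tau$ the boundary-flux term $-Z^{-1}\int_{\partial\mathcal{A}}e^{-\mathcal{L}(s,y,\lambda)/\beta}\,\mathcal{N}(a^\tau;y,\sigma^2(\tau)I)\,n(y)\,dS(y)$, where $n$ is the outward normal. This term is generically nonzero. For $\mathcal{A}=[0,1]$ and constant $\mathcal{L}$, the formula returns $0$. The true score is $\big(\varphi(a^\tau/\sigma)-\varphi((a^\tau-1)/\sigma)\big)/\big(\sigma\,\pi_*^\tau(a^\tau)\big)$, with $\varphi$ the standard normal density, and this is nonzero for $a^\tau\neq 1/2$.

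Taking $\mathcal{A}=\mathbb{R}^m$ instead makes the Gaussian expectation legitimate, but then Lipschitz continuity no longer gives $Z<\infty$ (constant $\mathcal{L}$ again fails). The clean fix is to work on $\mathbb{R}^m$ and assume $\int_{\mathbb{R}^m}e^{-\mathcal{L}(s,a,\lambda)/\beta}\,da<\infty$. Then your Gibbs argument and your dominated-convergence step go through unchanged, with dominating function $Ce^{L\sigma(\tau)|\epsilon|/\beta}$.

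Finally, claim only the ``if'' direction in Stage 1. Any drift differing from the true score by a field $v$ with $\nabla\cdot(\pi_*^\tau v)=0$ yields the same marginals, so ``exactly when'' is not justified. The statement does not need it.
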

% Proposition~\ref{prop:Optimality Condition in score matching} illustrates that the 
% policy generation process can be viewed as Langevin dynamics in \eqref{eq:stochastic_a} following the gradient field of the Lagrangian. 
% However, when the diffusion model attempts to match the score function 
% $\nabla_a \log \pi(a|s) = - \tfrac{1}{\beta}\nabla_a \mathcal{L}(s,a,\lambda)$ 
% in practice, two major challenges arise from score-maching view.

\begin{remarkbox}
Proposition~\ref{prop:Optimality Condition in score matching} implies that, the optimal score field $\phi_\theta(s,a,\tau)$ is entirely shaped by the Lagrangian $\mathcal{L}(s,a,\lambda)$. 
However, in practice, two major challenges arise from the gradient term $\nabla_a \mathcal{L}(s,a,\lambda)$ in \eqref{eq:energy_guided_score}.

First, in the early stages of training, the Lagrangian gradient 
$\nabla_a \mathcal{L}(s,a,\lambda) = - \nabla_a Q^{\pi}(s,a) + \lambda \nabla_a Q_c^{\pi}(s,a)$ is computed via the auto-differentiation of $Q^{\pi}$ and $Q_c^{\pi}$. Since estimators are usually highly nonconvex, the resulting gradient field tends to be noisy and irregular, introducing many suboptimal directions in the score field. Thus, the diffusion process may sample from unstable or low-reward regions, making the target Boltzmann distribution difficult to approximate.

Second, the Lagrange multiplier $\lambda$ is typically initialized with a small value and updated  during learning. When the cost estimate of $Q_c^{\pi}$ is inaccurate \cite{thrun2014issues}, the update of $\lambda$ may fluctuate or drift, leading to a distorted energy landscape $\mathcal{L}(s,a,\lambda)$ and a mismatched score field. In this case, the policy can continue to sample from high-reward yet high-cost regions, as the weak penalty term fails to effectively enforce the constraint. This results in a distributional mismatch between the learned policy distribution and intended safe Boltzmann distribution.
\end{remarkbox}
% \vspace{-10pt}

\begin{figure}[h]
    \centering
    \includegraphics[width=0.95\linewidth]{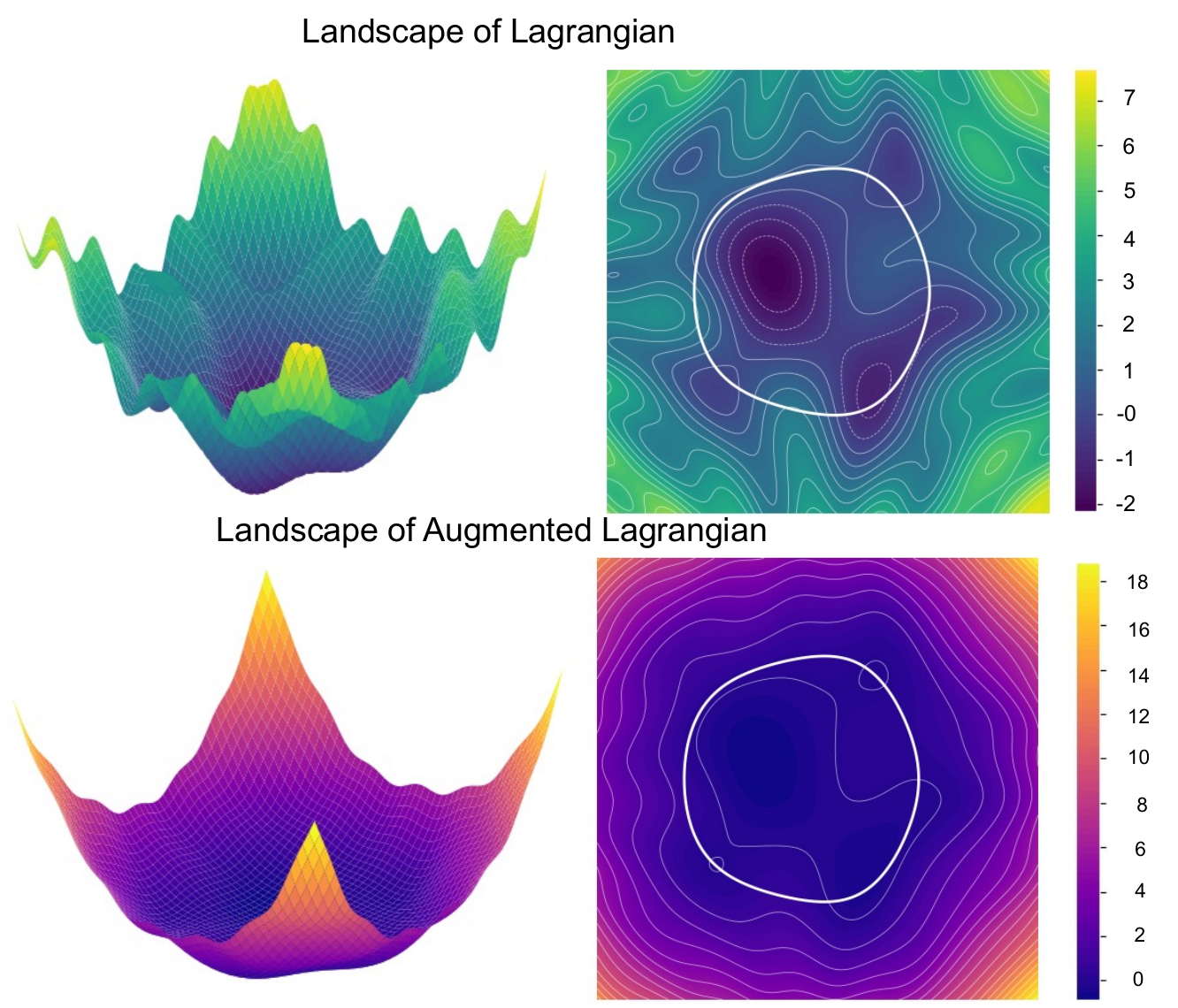}
    \caption{Visualization of energy landscapes for a differential-drive mobile robot \citep{contreras2017controllers} after 100 training episodes based on our methods (see the two algorithms implementation in Appendix~\ref{append:algorithm}).
    \textbf{Top:} The landscape of standard Lagrangian induces a highly irregular and non-convex energy surface with sharp curvature, reflecting unstable denoising dynamics.
    \textbf{Bottom:} The landscape of augmented Lagrangian yields a smoother and locally convexified energy landscape, effectively regularizing the score field. 
    In both contour plots, the \textit{white circle} indicates the safe policy region, within which policies satisfy the safety constraints.}
    \label{fig:comparison_of_lagrangian}
\end{figure}

\vspace{-10pt}

Overall, these phenomena indicate that the limitations identified in 
(\textcolor{cyan}{L2}) and (\textcolor{cyan}{L3}) are not resolved by applying Lagrangian-guided diffusion as Proposition \ref{prop:Optimality Condition in score matching} directly. Instead, the instability of the Lagrangian gradient and the dual variable is directly inherited and even amplified under the diffusion formulation. A straightforward visualization of energy landscape induced by the Lagrangian is in Figure~\ref{fig:comparison_of_lagrangian}.

\subsection{Locally Convexified Energy Landscape}
%The instability observed in standard Lagrangian-guided diffusion arises from an irregular gradient field induced by Lagrangian landscape, which adversely affects policy sampling and, in turn, destabilizes the dual updates. 
To overcome these limitations, we propose the \textit{augmented Lagrangian} to guide diffusion process. Following the classical results in constrained optimization~\cite{hestenes1969multiplier, powell1969method, rockafellar1976augmented}, we obtain the augmented Lagrangian $\mathcal{L}_A$ corresponding to the problem \eqref{eq:lagrangian_form}  as
\begin{equation}
    \mathcal{L}_A(s,a,\lambda) 
    := - Q^{\pi}(s,a) 
    + \frac{\Big( \big[ \lambda + \rho \big(Q_c^{\pi}(s,a) - h \big) \big]_+^2 - \lambda^2 \Big)}{2\rho},
    \label{eq:aug_lagrangian}
\end{equation}
where $\rho > 0$ controls the magnitude of the quadratic penalty and thus the degree of local convexification, $[\Box]_+ = \max(0, \Box)$ enforces non-negativity of the dual variables. The quadratic term \textit{locally increases curvature around constraint boundaries}, yielding a smoother and partially convex energy landscape that mitigates gradient irregularities and stabilizes the denoising dynamics.

\begin{theorem}
\label{theorem:convexified_landscape}
%Let $\mathcal{L}_A(s,a,\lambda)$ be the augmented Lagrangian defined in \eqref{eq:aug_lagrangian} with penalty coefficient $\rho > 0$. 
Under mild regularity assumptions on $Q^{\pi}$ and $Q_c^{\pi}$ (i.e., boundedness and smoothness), $\mathcal{L}_A(s,a,\lambda)$ in \eqref{eq:aug_lagrangian} satisfies the following properties:

(a) \textbf{(Local landscape convexification).}
The quadratic penalty term in $\mathcal{L}_A$ introduces additional positive semidefinite curvature in a neighborhood of the feasible region, resulting in smoother gradients and a better-conditioned local energy landscape for denoising dynamics. In particular,
\begin{equation}
\begin{split}
    & \nabla_a^2 \mathcal{L}_A(s,a,\lambda) = \nabla_a^2 \mathcal{L}(s,a,\lambda) +
\\
 &  \fcolorbox{cyan}{white}{$\rho\, \nabla_a Q_c^{\pi}(s,a)\nabla_a Q_c^{\pi}(s,a)^\top$}
+ \mathcal{O}(|Q_c^{\pi}(s,a)-h|),
\label{eq:convexified_landscape}
\end{split}
\end{equation}
where the dominant additional term is positive semidefinite.

(b) \textbf{(Invariant optimal policy and objective).}
Fix the optimal dual variable $\lambda^*$, the augmented Lagrangian preserves both the optimal value and the optimal policy distribution.
Specifically, if $(\pi^*,\lambda^*)$ is a primal-dual optimal solution, then
\begin{align*}
&  \min_{\pi}\,\mathbb{E}_{a \sim \pi}\big[\mathcal{L}_A(s,a,\lambda^*)\big]
 = 
\min_{\pi}\,\mathbb{E}_{a \sim \pi}\big[\mathcal{L}(s,a,\lambda^*)\big],
\end{align*}
and the induced Boltzmann distribution remains unchanged:
\[
\pi_A^*(a|s) = \pi^*(a|s).
\]
since $\exp\!\left(- \frac{\mathcal{L}_A(s,a,\lambda^*)}{\beta}\right)
    = 
    \exp\!\left(- \frac{\mathcal{L}(s,a,\lambda^*)}{\beta}\right)$. Therefore, introducing the augmented form only reshapes the energy landscape 
without altering the optimal policy (i.e., the resulting Boltzmann distribution in \eqref{eq:energy_guided_score}). See proof in Appendix~\ref{Append:proof_theorem_1}.

\end{theorem}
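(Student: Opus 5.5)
The proof reduces to explicit pointwise calculus on the penalty term. Write $g(s,a) := Q_c^{\pi}(s,a) - h$ and $P_\rho(g) := \big([\lambda+\rho g]_+^2 - \lambda^2\big)/(2\rho)$, so that $\mathcal{L}_A = -Q^{\pi} + P_\rho(g)$ and $\mathcal{L} = -Q^{\pi} + \lambda g$. Since $P_\rho$ is a scalar function of $g$, both parts follow from the chain rule applied to $P_\rho\circ g$, plus a comparison of $P_\rho(g)$ with $\lambda g$. Boundedness and smoothness of $Q^\pi, Q_c^\pi$ (say $C^2$ in $a$ with $\|\nabla_a^2 Q_c^\pi\|\le M$ on the compact $\mathcal{A}$) are used only to bound remainders.

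For part (a), I split the action space into the active branch $\{\lambda+\rho g>0\}$ and the inactive branch $\{\lambda+\rho g<0\}$.

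On the active branch, $P_\rho(g)=\lambda g + \tfrac{\rho}{2}g^2$. Hence
\[
\nabla_a P_\rho = (\lambda+\rho g)\nabla_a Q_c^\pi,
\qquad
\nabla_a^2 P_\rho = \lambda\nabla_a^2 Q_c^\pi + \rho\,\nabla_a Q_c^\pi\nabla_a Q_c^{\pi\top} + \rho g\,\nabla_a^2 Q_c^\pi .
\]
Adding $-\nabla_a^2 Q^\pi$ recovers $\nabla_a^2\mathcal{L} + \rho\,\nabla_a Q_c^\pi\nabla_a Q_c^{\pi\top} + R$, with remainder $\|R\|\le \rho M\,|Q_c^\pi-h|$. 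This is exactly the $\mathcal{O}(|Q_c^\pi-h|)$ term in \eqref{eq:convexified_landscape}. The rank-one term is positive semidefinite, since $v^\top(\nabla Q_c\nabla Q_c^\top)v=(v^\top\nabla Q_c)^2\ge 0$.

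The "neighborhood of the feasible region" is made precise as $\{|g|<\lambda/\rho\}$ when $\lambda>0$. This set lies entirely in the active branch, so $P_\rho$ is $C^2$ there. On the inactive branch $P_\rho$ is constant, and the kink $g=-\lambda/\rho$ lies strictly inside the safe region. For $\lambda=0$ I would state the expansion with one-sided Hessians, noting that $P_\rho$ is still $C^1$.

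For part (b), the plan is a pointwise comparison $\mathcal{L}_A\ge\mathcal{L}$ with an equality set. On the active branch, $\mathcal{L}_A-\mathcal{L}=\tfrac{\rho}{2}g^2\ge0$. On the inactive branch,
\[
\mathcal{L}_A-\mathcal{L} = -\lambda^2/(2\rho)-\lambda g > \lambda^2/(2\rho) \ge 0,
\]
using $g<-\lambda/\rho$. Equality holds precisely on $E_{\lambda}=\{a:\ g=0\}\cup\{a:\ \lambda=0,\ g\le 0\}$, that is, where complementary slackness $\lambda g=0$, $g\le 0$ holds pointwise. Since the difference is nonnegative, $\min_\pi\mathbb{E}_\pi[\mathcal{L}_A(\cdot,\lambda^*)]\ge\min_\pi\mathbb{E}_\pi[\mathcal{L}(\cdot,\lambda^*)]$. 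Evaluating both at $\pi^*$, which by primal feasibility and complementary slackness at $(\pi^*,\lambda^*)$ is supported in $E_{\lambda^*}$, gives the reverse inequality. Hence the two values coincide and $\pi^*$ is optimal for both.

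For the Boltzmann statement, $\exp(-\mathcal{L}_A/\beta)=\exp(-\mathcal{L}/\beta)$ holds on $E_{\lambda^*}$. The normalised densities therefore agree whenever the relevant support lies in $E_{\lambda^*}$. This is the case in particular when $\lambda^*=0$ and the problem is strictly feasible, or when the optimal actions saturate the constraint.

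I expect the main obstacle to be this last step. Off $E_{\lambda^*}$ the energies genuinely differ, by $\tfrac{\rho}{2}g^2$ or a strictly positive constant, so a literal identity of Boltzmann densities on all of $\mathcal{A}$ cannot hold for arbitrary $a$. I would therefore first state the equality on the complementary-slackness set $E_{\lambda^*}$. I would then argue that the invariance is meant on the support of the optimal policy, via the pointwise reading of KKT at the state–action level. As a fallback, I would show that the discrepancy vanishes as the policy concentrates on the optimal set, while noting that this requires $\beta\to0$ and is stated here as a claim I have not checked.
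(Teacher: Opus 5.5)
Your proposal is correct and follows essentially the same route as the paper. For (a), the paper uses the same active-branch chain-rule computation giving $\nabla_a^2\mathcal{L}+\rho\,\nabla_a Q_c^{\pi}\nabla_a Q_c^{\pi\top}+\rho(Q_c^{\pi}-h)\nabla_a^2 Q_c^{\pi}$. For (b), it uses pointwise complementary slackness to force $\mathcal{L}_A(s,a,\lambda^*)=\mathcal{L}(s,a,\lambda^*)$ on the support of $\pi^*$. Your pointwise bound $\mathcal{L}_A\ge\mathcal{L}$ supplies the inequality direction that the paper's ``as a consequence'' step for the optimal values leaves implicit. Your caveat that Boltzmann equality requires the support to lie in $E_{\lambda^*}$ is exactly the assumption the paper builds in through its state--action-level KKT conditions ``for $a\sim\pi^*$''.
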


\begin{remarkbox1}
According to Theorem~\ref{theorem:convexified_landscape}, the proposed augmented formulation directly addresses the two major limitations of the standard Lagrangian. 
First, by introducing the quadratic penalty term, $\mathcal{L}_A$ smooths out irregularities in the Lagrangian gradient $\nabla_a \mathcal{L}(s,a,\lambda)$ and adds a positive semidefinite curvature correction term $\rho\,\nabla_a Q_c^{\pi}\nabla_a Q_c^{\pi}{}^\top$ (see \eqref{eq:convexified_landscape}). This induces local convexity in the action space around the \emph{constraint-active region}, where gradient instability is most pronounced.  Such regions are critical during sampling, as local convexification around the constraint boundary reshapes the energy landscape so that gradients steer diffusion trajectories toward the feasible action set, reducing the likelihood of sampling unsafe actions.
By regularizing the local curvature in these regions, the augmented formulation mitigates gradient irregularities and stabilizes the denoising dynamics.
Second, unlike the standard Lagrangian where the dual variable $\lambda$ linearly penalizes constraint violations, the augmented term in \eqref{eq:aug_lagrangian} explicitly reshapes the energy landscape itself.  This quadratic correction increases the energy in the unsafe action set $\{a \mid Q_c^{\pi}(s,a) > h\}$ and creates a smooth potential well around the feasible action set. 
As a result, when $\rho$ is large, the induced Boltzmann distribution $\pi_A(a |s)\propto\exp\!\left(-\frac{\mathcal{L}_A(s,a,\lambda)}{\beta}\right)$ concentrates its probability mass within safety-compliant regions, allowing the diffusion process to sample more reliably from stable, low-cost regions, even in the early stage of learning. Visualization of the energy landscape of the augmented Lagrangian is provided in Figure~\ref{fig:comparison_of_lagrangian}. 
Together, these two properties directly address limitations (\textcolor{cyan}{L2}) and (\textcolor{cyan}{L3}).
\end{remarkbox1}

Our analysis shows that the augmented Lagrangian encourages policy samples to concentrate within the safe region, which in turn stabilizes the dual variable updates and mitigates oscillatory behavior. Moreover, since the augmentation does not alter the stationary conditions of the original constrained problem, both the optimal policy and the objective value remain invariant at convergence (see Theorem~\ref{theorem:convexified_landscape}(b)).
Figure~\ref{fig:compare_auglag_lag} compares diffusion policies guided by the standard Lagrangian and the augmented Lagrangian. The results corroborate the improved denoising dynamics and enhanced training stability predicted by Theorem~\ref{theorem:convexified_landscape}. 

\subsection{Practical Algorithm and Discrepancy Analysis}
Next, we introduce our practical algorithm, Augmented Lagrangian-Guided Diffusion (ALGD), which illustrates how the augmented Lagrangian guides the diffusion process toward the intended Boltzmann distribution. Algorithm~\ref{alg:algd} in Appendix~\ref{append:algorithm} summarizes the core idea of ALGD, which consists of three key components:
(i) policy generation via reverse-time diffusion sampling by \eqref{eq:langevin_dynamics}, (ii) critic learning for both $Q$ and $Q_c$ estimation, and (iii) score matching guided by the augmented Lagrangian. In particular, we highlight two aspects of the method:  the ensemble learning of cost critics to improve constraint estimation, and the Monte Carlo estimation of the score function guided by the augmented Lagrangian.

\textbf{Ensemble Cost Critics.} To enhance the accuracy of cost-value estimation, ALGD employs an ensemble of $M$ cost critics $\{Q^c_{\psi_i}\}_{i=1}^M$. 
% \zs{Since this ensemble of cost critics also help improve the stability, perhaps consider add an ablation study to check: once you remove it how does the method perform?} 
Each critic shares the same neural network but is initialized with randomized weights. 
All critics are trained independently on the same replay buffer using identical temporal-difference objectives. 
The ensemble outputs are then averaged to produce a single cost estimate 
$\bar{Q}_c(s,a)$ that is used in the augmented Lagrangian $\mathcal{L}_A(s,a,\lambda)$. This ensemble design improves the accuracy of the cost-value estimation, which in turn yields improved gradient estimation.

% In our framework, the diffusion process follows a variance exploding (VE) stochastic differential equation (SDE) formulation~\cite{song2020score, akhound2024iterated} (see details in Appendix~\ref{append:pre}). 
% In the reverse-time formulation, the drift term contains a time-dependent coefficient that scales the score function, typically proportional to $\frac{d[\sigma^2(t)]}{dt}$. 
% This coefficient is estimated or computed according to the noise schedule $\sigma(t)$, ensuring that the score network is properly weighted across different noise levels.

\textbf{Monte Carlo Score Estimation.} As shown in Proposition~\ref{prop:Optimality Condition in score matching}, the score function $\phi_A(s,a^\tau,\tau)$ follows the gradient of the Lagrangian $\mathcal{L}_A(s,a^{0|\tau},\lambda)$ under a weighted expectation (red shadow of \eqref{eq:energy_guided_score}). Since this expectation cannot be computed analytically, we approximate it using Monte Carlo sampling. ALGD employs a weighted Monte Carlo estimator based on a proposal distribution $q(a^{0|\tau} | s)$, chosen as a Gaussian $\mathcal{N}(a^{\tau}, \sigma^2(\tau) I)$ according to \eqref{eq:energy_guided_score}. The asymptotic unbiased expectation can thus be approximated using $N$ Monte Carlo samples $\{a^{0|\tau,(i)}\}_{i=1}^N$ drawn from $q(a^{0|\tau}| s)$:

\begin{equation} \label{eq:monte_carlo}
    \phi_A(s,a^\tau,\tau) 
\approx 
\sum_{i=1}^N 
- \colorbox{red!15}{$w_i$} \nabla_a \frac{\mathcal{L}_A(s, a^{0|\tau,(i)}, \lambda)}{\beta}
\end{equation}
with \colorbox{red!15}{$
w_i =
\frac{
  \exp\!\big[-\tfrac{1}{\beta}\mathcal{L}_A(s,a^{0|\tau,(i)},\lambda)\big]
}{
  \sum_{j=1}^N \exp\!\big[-\tfrac{1}{\beta}\mathcal{L}_A(s,a^{0,(j)},\lambda)\big]
}$} and for all $a^{0|\tau,(i)} \sim q(a^{0|\tau}| s)$ (see analysis in Lemma~\ref{lemma:montecarlo_score_error}). This weighted Monte Carlo estimation naturally incorporates the augmented Lagrangian energy into the score evaluation,  assigning higher importance to low-energy actions (i.e., actions that better satisfy constraints and yield higher rewards).  In practice, this reparameterization trick provides a differentiable
% \zs{right if you use reparameterization trick somewhere} 
 surrogate for the exact score,  allowing the diffusion model to denoise actions along the gradients of $\mathcal{L}_A(s,a,\lambda)$. The computationally intensive Monte Carlo estimation is only used as score network training. At test time, the score network provides an amortized solution for efficient policy generation, see details in Algorithm~\ref{alg:algd}.
 
\begin{theorem} \label{theorem:error_analysis}
Suppose that $ \pi^0_{A}(a^0|s) \propto \exp\!\left(- \tfrac{\mathcal{L}_A(s,a^{0|\tau},\lambda)}{\beta}\right)$ and its gradient $\| \nabla_a \exp\!\left(- \tfrac{\mathcal{L}_A(s,a^{0|\tau},\lambda)}{\beta}\right) \|$ is sub-Gaussian. 
Then, with probability at least $1-2\delta$, the discrepancy between the generated policy distribution and the target Boltzmann distribution is bounded by
\begin{equation}
\begin{aligned}
    & D_{\text{KL}}\!\big(\pi^*(a|s)\,\|\,\tilde{\pi}^0_{A}(a^0|s)\big) \\
    \leq &
    \underbrace{
      2D_{\text{KL}}\!\big(\pi^*(a|s)\,\|\,\pi^0_{A}(a^0|s)\big)
    }_{\text{discrepancy of distributions}}
    \quad +
    \underbrace{
      \frac{cK\log(2/\delta)}{N}
    }_{\text{error from Monte Carlo estimation}}.
\end{aligned}
\end{equation}

Here, $\pi^*(a|s)$ denotes the target Boltzmann distribution associated with the optimal dual variable $\lambda^*$, 
while $\tilde{\pi}_{A}(a^0|s)$ represents the approximated distribution generated from score function in~\eqref{eq:monte_carlo}. See proof in Appendix~\ref{Append:proof_theorem_2}.
\end{theorem}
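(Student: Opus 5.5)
The plan is to split the divergence into a "target-to-ideal" part and an "ideal-to-approximate" part, and then control the second part by the Monte Carlo error of the score estimator \eqref{eq:monte_carlo}, summed over the $K$ diffusion steps.

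\textbf{Step 1 (decomposition).} The factor $2$ in the statement suggests avoiding a true triangle inequality for KL, which does not exist. Instead I would use a log-ratio split:
\[
\log\frac{\pi^*}{\tilde\pi^0_A}=\log\frac{\pi^*}{\pi^0_A}+\log\frac{\pi^0_A}{\tilde\pi^0_A}.
\]
Taking the expectation under $\pi^*$ gives $D_{\mathrm{KL}}(\pi^*\|\pi^0_A)+\mathbb{E}_{\pi^*}[\log(\pi^0_A/\tilde\pi^0_A)]$. I would bound the cross term using $ab\le \tfrac12(a^2+b^2)$ or Donsker--Varadhan, which costs one more copy of $D_{\mathrm{KL}}(\pi^*\|\pi^0_A)$ plus a term depending only on $\pi^0_A$ versus $\tilde\pi^0_A$. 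That yields the factor $2$. Theorem~\ref{theorem:convexified_landscape}(b) identifies $\pi^*$ with the augmented Boltzmann distribution at $\lambda^*$, so this first term is exactly the dual-mismatch discrepancy. The remaining task is to show that the leftover term is at most $cK\log(2/\delta)/N$.

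\textbf{Step 2 (score error to path KL).} $\pi^0_A$ is generated by the reverse SDE \eqref{eq:langevin_dynamics} with the exact score $\phi_*$ from Proposition~\ref{prop:Optimality Condition in score matching}. $\tilde\pi^0_A$ is generated by the same SDE with the estimated score $\phi_A$ from \eqref{eq:monte_carlo}. By data processing and Girsanov applied to the discretized chain \eqref{eq:policy_generation}, the KL between the final marginals is bounded by
\[
\sum_{\tau=1}^K \tfrac12\,\Delta\sigma^2(\tau)\,\|\phi_A-\phi_*\|^2 .
\]
The noise-schedule factors are absorbed into $c$. This produces the linear dependence on $K$.

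\textbf{Step 3 (self-normalized MC concentration; the main obstacle).} At each step, $\phi_A-\phi_*$ is a ratio estimator:
\[
\hat A/\hat B-A/B,\qquad \hat A=\tfrac1N\sum_i \nabla_a e^{-\mathcal{L}_A/\beta},\qquad \hat B=\tfrac1N\sum_i e^{-\mathcal{L}_A/\beta},
\]
with i.i.d.\ Gaussian proposal samples. I would write
\[
\hat A/\hat B-A/B=\frac{(\hat A-A)-(A/B)(\hat B-B)}{\hat B}.
\]
The numerator is handled by the sub-Gaussian assumption on the gradient, and $\hat B-B$ is handled by Hoeffding, since $e^{-\mathcal{L}_A/\beta}$ is bounded because $Q,Q_c$ are bounded. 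Each bound holds with probability $1-\delta$, which accounts for the total probability $1-2\delta$. Together they give a deviation of order $\sqrt{\log(2/\delta)/N}$.

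The obstacle is the denominator $\hat B$: it must be bounded below uniformly. I would use boundedness of $\mathcal{L}_A$ to obtain $B\ge e^{-\sup\mathcal{L}_A/\beta}>0$, and then note that on the Hoeffding event $\hat B\ge B/2$ once $N$ is moderately large. Squaring the deviation gives $\|\phi_A-\phi_*\|^2\le c'\log(2/\delta)/N$. Uniformity over the $K$ steps is the delicate part. I would either fold a union bound into $c$ via $\log(2K/\delta)$, or argue that the same sample batch is reused at each step, and I would accept the constant inflation. Substituting into Step 2 gives the $cK\log(2/\delta)/N$ term, which completes the bound.
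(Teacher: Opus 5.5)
There is a genuine gap in your Step 1, and it sits exactly where the factor $2$ is supposed to come from. Donsker--Varadhan with $f=\log(\pi^0_A/\tilde\pi^0_A)$ gives $\mathbb{E}_{\pi^*}[f]\le D_{\mathrm{KL}}(\pi^*\|\pi^0_A)+\log\mathbb{E}_{\pi^0_A}[\pi^0_A/\tilde\pi^0_A]$. The leftover term is the R\'enyi-$2$ divergence $\log\bigl(1+\chi^2(\pi^0_A\|\tilde\pi^0_A)\bigr)$, not a KL. It dominates $D_{\mathrm{KL}}(\pi^0_A\|\tilde\pi^0_A)$ and can be infinite when the latter is finite. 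The Young-inequality alternative has nothing to act on, because the split is a sum of logarithms, not a product. No other marginal-level trick can work either: no inequality $D_{\mathrm{KL}}(P\|R)\le 2D_{\mathrm{KL}}(P\|Q)+C\,D_{\mathrm{KL}}(Q\|R)$ holds in general. Take two-point laws with $P(x)=1/\log(1/\varepsilon)$, $Q(x)=\varepsilon$, $R(x)=\varepsilon e^{-1/\varepsilon}$; the right side stays bounded while the left side diverges as $\varepsilon\to0$. So your Step 2, which bounds only a KL between final marginals, does not plug into Step 1. Repairing your route would require the exponential-martingale form of Girsanov to bound a path-space R\'enyi-$2$ divergence. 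That in turn needs an almost-sure bound on the drift error along every path, which is more than the pointwise high-probability estimate your Step 3 delivers.

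The paper avoids this by never splitting at the level of densities. Write $\phi_*$ for the target score at $\lambda^*$, $\hat\phi_A$ for the exact augmented score, and $\tilde\phi_A$ for the Monte Carlo estimate. The paper applies Girsanov once, to the pair of reverse SDEs with drifts $\phi_*$ and $\tilde\phi_A$. This controls $D_{\mathrm{KL}}(\pi^*\|\tilde\pi^0_A)$ by $\frac12\mathbb{E}\int_0^K\frac{d\sigma^2}{d\tau}\|\tilde\phi_A-\phi_*\|^2\,d\tau$, with the expectation taken over paths of the \emph{target} dynamics. Only then does it split the drift as $(\tilde\phi_A-\hat\phi_A)+(\hat\phi_A-\phi_*)$ and use $\|x+y\|^2\le2\|x\|^2+2\|y\|^2$. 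Both pieces are integrated against the same path measure. The second piece is therefore exactly twice the Girsanov expression for the divergence between the target and ideal augmented dynamics, which is the $2D_{\mathrm{KL}}$ term. Strictly, this is the path-space KL, which dominates the marginal one; the paper identifies the two. The first piece is the score error integrated over the horizon, which gives the $cK\log(2/\delta)/N$ term. With this reordering your Step 3 slots in essentially unchanged. In fact, your ratio analysis, which compares against the kernel expectations $A,B$ and lower-bounds $\hat B$, is more careful than the paper's lemma.
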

Theorem~\ref{theorem:error_analysis} quantifies the statistical discrepancy between the diffusion sampled policy and the target Boltzmann distribution, providing a finite-sample approximation bound that complements the stability guarantees of ALGD. We remark that $D_{\text{KL}}\!\big(\pi^*(a|s)\,\|\,\pi^0_{A}(a^0|s)\big)$ is bounded by $\mathcal{O}(\frac{\varepsilon^2}{\beta^2}
+  \frac{\varepsilon^2}{\beta^3} + 
 \frac{\varepsilon^2}{\beta^4})$,  with $|\mathcal{L}_A(s,a,\lambda) - \mathcal{L}(s,a,\lambda^*)| \leq \varepsilon$ for all $(s,a) \in \mathcal{S\times A}$ (see Corollary~\ref{thm:reward_score_gap} in Appendix~\ref{Append:proof_theorem_2}). 

\vspace{-10pt}

\begin{figure}[H]
    \centering
    \includegraphics[width=0.95\linewidth]{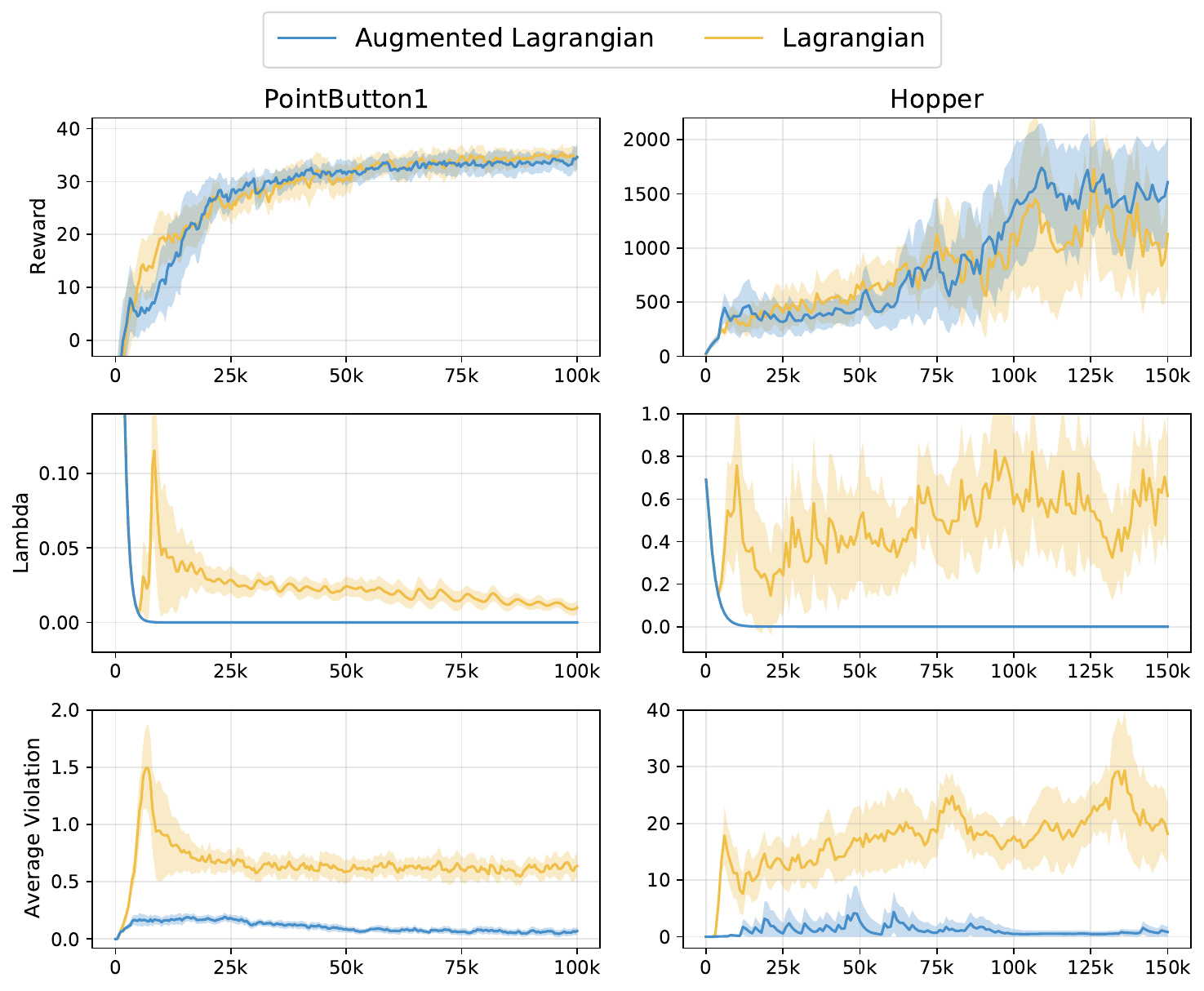}
    \caption{Comparative analysis of training stability between the standard Lagrangian (yellow) and the augmented Lagrangian (blue). (Top) Evaluation rewards; (Middle) Dual variable $\lambda$ updates; (Bottom) Average constraint violation (calculated as $\max(0,c(s)-h)$). The augmented formulation directly addresses the oscillation of dual variables (\textcolor{cyan}{L2}) and the instability of the induced Boltzmann distribution (\textcolor{cyan}{L3}) by regularizing the local curvature of the energy landscape. This results in more stable denoising dynamics and dual updating, allowing the policy to reach the safety threshold significantly faster and with zero dual variables (see full result in Figure~\ref{fig:compare_full_auglag_lag} in Appendix~\ref{append:full_comparative_analysis}).}
    \label{fig:compare_auglag_lag}
\end{figure}

% \vspace{-20pt}

\section{Numerical Experiments}
To comprehensively evaluate performance, we conduct experiments on both the Safety-Gym benchmark~\cite{ji2023safety} and the velocity-constrained MuJoCo benchmark~\cite{tassa2018deepmind}, which together represent a diverse set of safety-critical control tasks.

\begin{figure*}[t]
    \centering

    \begin{subfigure}{\linewidth}
        \centering
        \includegraphics[width=0.95\linewidth]{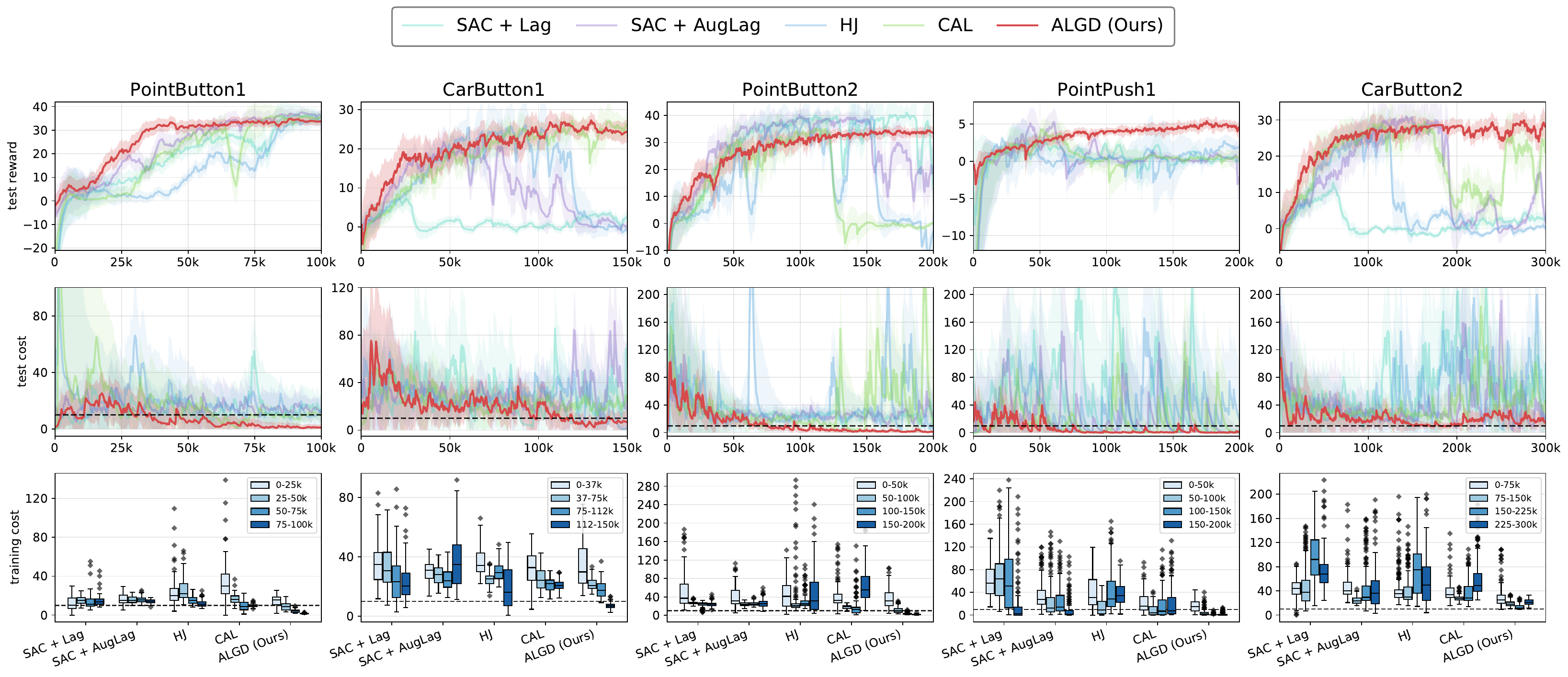}
        \label{fig:safetygym}
    \end{subfigure}

    \vspace{-0.2em}

    \begin{subfigure}{\linewidth}
        \centering
        \includegraphics[width=0.95\linewidth]{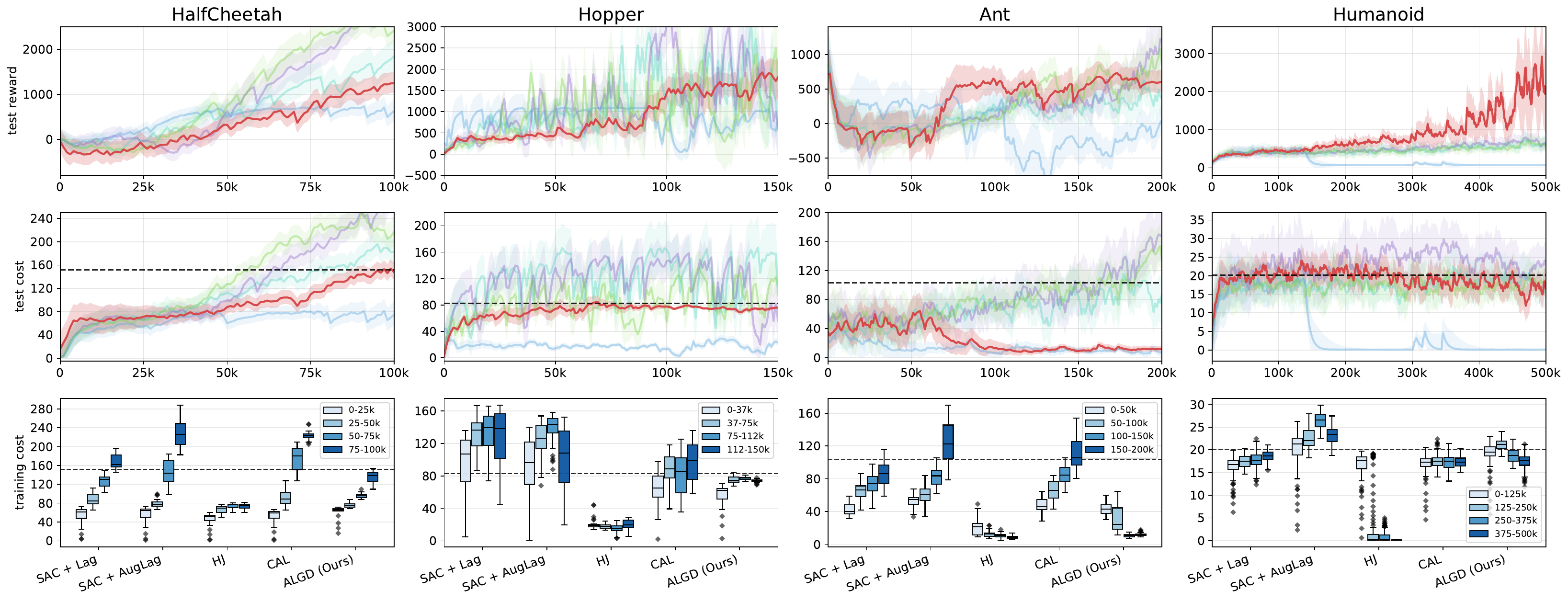}
        \label{fig:mujoco}
    \end{subfigure}

    \caption{Comparison of performance across Safety-Gym and MuJoCo benchmarks. For each benchmark, the first row reports the evolution of test reward versus environment steps, the second row shows the corresponding test safety cost (with the dashed line indicating the cost budget), and the third row presents box plots of the training cost distribution over four equal step intervals. Overall, ALGD achieves competitive rewards while exhibiting improved stability and safety compared to baselines.}
    \label{fig:overall_safetygym_mujoco}
\end{figure*}

\textbf{Baselines.}
Our baselines are selected to cover the paradigms in safe RL shown in Table~\ref{tab:comparison_safe_rl}. Specifically, we categorize them into two classes.
(1) \emph{primal-dual methods}, which enforce safety constraints through Lagrangian relaxation and dual variable updates. This category includes both on-policy and off-policy algorithms, such as Conservative Augmented Lagrangian (CAL) \cite{wu2024off}, Soft Actor-Critic with Lagrangian (SAC + Lag) \cite{ji2023safety}, Soft Actor-Critic with Augmented Lagrangian (SAC + AugLag), and Proximal Policy Optimization with Lagrangian relaxation (PPO + Lag) \cite{ji2023safety}.
(2) \emph{Hard-constrained approaches}, which guarantee strict constraint satisfaction by explicitly characterizing the safe set via Hamilton-Jacobi (HJ) Rechability Analysis \cite{yu2022reachability}. 

% This categorization enables our comparison to span a broad range of design choices in safe RL, including on-policy versus off-policy learning, primal-dual versus hard-constrained optimization, and unimodal versus multimodal policy representations.

\textbf{Result analysis.} Our result analysis is structured around the core contributions of this work. 

\textit{Question 1: Can directly incorporating diffusion models into safe reinforcement learning resolve the key bottlenecks of this domain?}  Our answer is \textbf{negative}. First, the energy landscape induced by safety constraints is highly irregular, which results in an unstable score field  when diffusion models are applied directly. As illustrated in Figure~\ref{fig:comparison_of_lagrangian} (left), the resulting energy surface is markedly non-smooth and exhibits sharp transitions near constraint boundaries. This severely degrades the reliability of the learned score function in high-cost regions. Because the denoising dynamics of diffusion models are governed by energy gradients, the irregular energy landscape directly translates into unstable and unreliable policy sampling in online safe RL.
Second, this challenge is further manifested in the training dynamics under standard Lagrangian optimization. As shown by the training curves in Figure~\ref{fig:compare_auglag_lag} (yellow line), updates of the dual variables remain highly unstable, a direct consequence of the underlying irregular energy landscape. In particular, the policy frequently samples actions from regions that simultaneously yield high reward and high risk, leading to oscillations and repeated safety violations during training. 

\textit{Question 2: Why and how can our algorithm enable diffusion models to work in safe RL settings?} Our algorithm succeeds by explicitly addressing the interaction between diffusion dynamics and constrained optimization. Rather than relying on increased policy expressiveness, ALGD reshapes the energy landscape induced by safety constraints through an augmented Lagrangian formulation. This structural modification regularizes sharp transitions near constraint boundaries (see Figure~\ref{fig:comparison_of_lagrangian}, bottom), resulting in a smoother energy surface and a more stable score field for diffusion-based denoising. In addition, the augmented Lagrangian in ALGD introduces a quadratic penalty term, which biases policy sampling toward the safe region. By discouraging repeated visits to high-cost areas, this quadratic term reduces abrupt constraint violations during training, thereby yielding more stable primal-dual updates. As a result, both score learning and critic updating proceed in a smoother and more consistent manner, as reflected by the stable training dynamics shown in Figure~\ref{fig:compare_auglag_lag} (blue line) and Figure~\ref{fig:compare_full_auglag_lag} with full results.

\textit{Question 3: Does our algorithm address the fundamental limitations of primal-dual methods in safe RL?} \textbf{Yes.}  First, ALGD demonstrates markedly more stable training dynamics and consistently achieves competitive rewards with lower constraint costs across all evaluated baselines (see Figure~\ref{fig:overall_safetygym_mujoco}). In contrast, existing off-policy primal-dual methods frequently suffer from oscillatory dual updates, which lead to unstable learning behavior and repeated sampling of policies that simultaneously yield high reward and high risk. For example, on the PointButton1\&2 and HalfCheetah tasks, several primal-dual baselines attain relatively high rewards but incur substantial constraint violations, indicating non-convergent or poorly stabilized dual variables. Second, we observe that naively incorporating augmented Lagrangian techniques into the SAC framework does not yield consistent improvements. This is because the Gaussian policy parameterization used in SAC, even when combined with augmented Lagrangian penalties, does not fundamentally alter the underlying non-convex optimization geometry induced by safety constraints, the limitations (\textcolor{cyan}{L1})-(\textcolor{cyan}{L3}) are still unsolved. In contrast, the diffusion-based policy representation in ALGD provides a substantially more expressive policy class when facing non-convex energy landscape. 

\textit{Question 4: Does ALGD provide advantages over on-policy primal-dual and hard-constrained methods in safe RL?} \textbf{Yes.} Compared to on-policy primal-dual approaches such as PPO+Lag, ALGD exhibits higher sample efficiency due to its off-policy training paradigm (see Figure~\ref{fig:on_policy_overall_safetygym_mujoco} in Appendix~\ref{append:on-policy_result}). With significantly fewer environment interactions, ALGD consistently achieves higher rewards while maintaining lower constraint costs across all evaluated tasks.
Moreover, unlike hard-constrained methods (i.e., HJ), ALGD does not rely on   restricting the policy to an estimated safe set, which often leads to overly conservative behavior (low rewards and costs). As illustrated in Figure~\ref{fig:overall_safetygym_mujoco}, this allows ALGD to explore a broader safe region and attain higher average returns without sacrificing safety. Importantly, ALGD satisfies the prescribed safety thresholds on all evaluated tasks, indicating empirical convergence to a saddle point of the constrained optimization problem.

\textbf{Ablation Study.}  We conduct ablation studies on the number of Monte Carlo samples, critic ensemble size, and degree of convexification $\rho$ to validate our theory and robustness. We report partial results on the Monte Carlo sample size here. Full results are provided in Appendix~\ref{append:more_ablation}. These hyperparameters primarily serve to further improve performance; notably, our method remains effective and stable even without careful tuning of these choices.

\textit{Monte Carlo Score Estimation.} Figure~\ref{fig:monte_carlo_short} shows that using more samples leads to more stable training dynamics and improved performance, characterized by lower variance and higher average returns. Moreover, since Monte Carlo sampling is performed in parallel, increasing the sample size incurs only a modest additional computational overhead, as shown in Table~\ref{tab:mc_time_ablation}. This trend is consistent with our error analysis in Theorem~\ref{theorem:error_analysis}, as increasing the number of Monte Carlo samples improves score estimation, the error decreasing linearly as $\mathcal{O}(1/N)$. 

% \vspace{-5pt}

\begin{figure}[ht]
    \centering
    \includegraphics[width=0.99\linewidth]{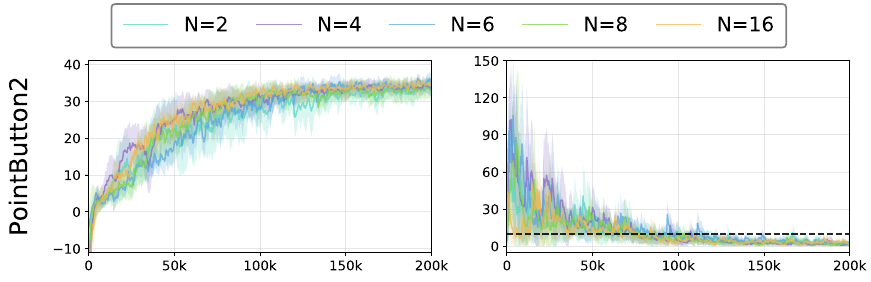}
    \caption{Ablation Studies of Monte Carlo sample size $N$. Due to the space limitation, see full results in Figure~\ref{fig:ablation_full_1} in  Appendix~\ref{append:more_ablation}.}
    \label{fig:monte_carlo_short}
\end{figure}

% \vspace{-10pt}

\begin{table}[ht]
\centering
\footnotesize
\setlength{\tabcolsep}{2pt}
\caption{GPU time (ms) under sampling different Monte Carlo sample sizes $N$ (mean $\pm$ std).}
\label{tab:mc_time_ablation}
\begin{tabular}{lccccc}
\toprule
\textbf{Task} & \textbf{$N{=}2$} & \textbf{$N{=}4$} & \textbf{$N{=}6$} & \textbf{$N{=}8$} & \textbf{$N{=}16$} \\
\midrule
PointPush1   
& $5.3{\pm}0.8$ & $5.4{\pm}0.8$ & $5.5{\pm}0.8$ & $5.6{\pm}0.8$ & $5.9{\pm}0.8$ \\
PointButton2 
& $4.6{\pm}1.0$ & $4.7{\pm}1.0$ & $4.8{\pm}0.8$ & $4.9{\pm}1.1$ & $5.0{\pm}1.1$ \\
\bottomrule
\end{tabular}
\end{table}

\section{Conclusion}
We propose ALGD, a diffusion-based framework for online safe RL that uses an augmented Lagrangian to stabilize diffusion dynamics and primal-dual optimization. By convexifying the energy landscape without altering the optimal policy distribution, ALGD achieves competitive rewards with lower cost than prior methods (see limitations in~\ref{append:limitation}). 

% \clearpage
\section*{Impact Statement}
This work improves the safety and stability of diffusion-based reinforcement learning by enabling reliable policy learning under cost constraints in online and off-policy settings. By grounding diffusion policy sampling in augmented Lagrangian theory, the proposed method supports safer decision-making with expressive multimodal policies, which is beneficial for applications such as robotics and autonomous systems. Responsible deployment requires accurate specification of safety costs and appropriate system-level safeguards to ensure reliable behavior in real-world settings. We do not identify any specific ethical concerns beyond standard considerations for deploying reinforcement learning systems in real-world environments.

% ``This paper presents work whose goal is to advance the field of Machine
% Learning. There are many potential societal consequences of our work, none
% which we feel must be specifically highlighted here.''

% The above statement can be used verbatim in such cases, but we encourage
% authors to think about whether there is content which does warrant further
% discussion, as this statement will be apparent if the paper is later flagged
% for ethics review.

% In the unusual situation where you want a paper to appear in the
% references without citing it in the main text, use \nocite
% \nocite{langley00}
% 
\bibliography{example_paper}
\bibliographystyle{icml2026}

%%%%%%%%%%%%%%%%%%%%%%%%%%%%%%%%%%%%%%%%%%%%%%%%%%%%%%%%%%%%%%%%%%%%%%%%%%%%%%%
%%%%%%%%%%%%%%%%%%%%%%%%%%%%%%%%%%%%%%%%%%%%%%%%%%%%%%%%%%%%%%%%%%%%%%%%%%%%%%%
% APPENDIX
%%%%%%%%%%%%%%%%%%%%%%%%%%%%%%%%%%%%%%%%%%%%%%%%%%%%%%%%%%%%%%%%%%%%%%%%%%%%%%%
%%%%%%%%%%%%%%%%%%%%%%%%%%%%%%%%%%%%%%%%%%%%%%%%%%%%%%%%%%%%%%%%%%%%%%%%%%%%%%%
\newpage
\appendix
\onecolumn

\section*{Notation}

\vspace{20pt}

\begin{center}
\begin{tabular}{cc} 
\toprule %[2pt]设置线宽     
Notations & Meaning   \\ %换行
\midrule %[2pt]  
$a$ & action \\ 
$a^{0|\tau}$ & the posterior distribution $p(a^0|a^{\tau}, s)$ \\ 
$c$ & safety cost \\
$d_0$ & initial distribution of state \\
% $f$ & drift term of nonlinear stochastic dynamics \\
$h$ & safety threshold \\
$r$ & reward function \\
$s$ & state \\
$t$ & time step \\
$\mathcal{A}$ & action set \\
$B^{\tau}$ & standard Brownian motions \\
$\mathcal{B}$ & replay buffer \\
$D_{\text{KL}}$ & KL divergence of two distributions \\
$\mathcal{L}$ & Lagrangian \\
$\mathcal{L}_A$ & augmented Lagrangian \\
% $\mathcal{K}$ & safe set \\ 
$\mathcal{N}$ & Gaussian distribution \\
$Q$ & $Q-$function of reward \\
$Q_c$ & $Q-$function of cost \\
% $Q_h$ & $Q-$function induced by Hamiltonian-Jacob reachability analysis \\
$\mathcal{S}$ & state set \\
% $\mathcal{U}$ & unsafe set \\
% $V$ & value function of reward \\
% $V_c$ & cost function \\
$Z$ & partition function \\
% $V_h$ & value function induced by Hamiltonian-Jacob reachability analysis \\
$\pi$ & policy distribution induced by Lagrangian \\
$\pi_A$ & policy distribution induced by augmented Lagrangian \\
$\phi$ & score function \\
$\lambda$ & Lagrangian multiplier (a.k.a. dual variable)\\ 
$ \tau$ &  diffusion step  \\
\bottomrule %[2pt]     
\end{tabular}
\end{center}

\clearpage

\section*{Appendix Overview}

This appendix is organized into four main parts. 

Part~\ref{append:diffusion_rl_literature} presents a more comprehensive literature review on diffusion-based reinforcement learning, including additional related work omitted from the main text due to space constraints. Part~\ref{append:limitation} provides a discussion of the limitations of ALGD and highlights potential avenues for future work.

Part~\ref{append:pre} provides a brief review of the classical \textit{Lagrangian method} and its augmented variant used in constrained optimization, serving as the theoretical foundation of our formulation.  
Part~\ref{append:theory} presents the detailed \textit{theoretical analysis}, which consists of four interrelated results that build upon each other to establish the main claims of the paper.

Specifically,
\begin{itemize}
    \item \textbf{Proposition~\ref{prop:Optimality Condition in score matching}} establishes the theoretical connection between diffusion-based score matching and the Lagrangian formulation of constrained reinforcement learning. It shows that, at optimality, the score field aligns with the gradient of the Lagrangian in the action space.
    
    \item \textbf{Theorem~\ref{theorem:convexified_landscape}} extends this result by introducing the augmented Lagrangian and proving that the quadratic penalty term convexifies the local Lagrangian landscape while preserving the optimal policy and objective value of the original problem.
    
    \item \textbf{Theorem~\ref{theorem:error_analysis}} quantifies the statistical discrepancy between the diffusion-generated policy and the target Boltzmann distribution, providing a finite-sample approximation bound that complements the structural stability guarantees derived from Theorem~\ref{theorem:convexified_landscape}.
    
    \item \textbf{Corollary~\ref{thm:reward_score_gap}} closes the analysis by showing that, as the augmented Lagrangian $\mathcal{L}_A$ converges to the true Lagrangian $\mathcal{L}$, the induced score mismatch and the resulting distributional gap vanish, thereby ensuring asymptotic consistency with the optimal constrained policy.
\end{itemize}

Together, these results form a coherent theoretical framework:  
the Proposition establishes the core connection between score matching and constrained optimization;  
the first Theorem provides structural and stability guarantees via the augmented formulation;  
the Error Analysis characterizes the finite-sample approximation error induced by diffusion-based sampling;  
and the final Theorem ensures that this residual gap shrinks to zero as the augmented Lagrangian recovers the true Lagrangian.

Regarding supplementary material for implementation and more experiments, Appendices~\ref{append:algorithm} and \ref{append:experiments} provide comprehensive details. Specifically, Appendix~\ref{append:algorithm} elaborates on the implementation specifics of our proposed framework. In Appendix~\ref{append:experiments}, we extend our empirical evaluation with extensive comparisons against baseline on-policy algorithms. Furthermore, we conduct systematic ablation studies on the number of Monte Carlo samples, the critic ensemble size, and the degree of convexification $\rho$, thereby validating the robustness of our method and its alignment with the theoretical analysis. Moreover, training details are listed in Appendix~\ref{append:nn_architectures}.

\clearpage
\section{More Literature Review} \label{append:diffusion_rl_literature}
\textbf{Literature Review of Diffusion-based RL.} Recent advances in diffusion models \citep{song2019generative, ho2020denoising, song2020score} have revealed fundamental connections between diffusion-based generative modeling and diffusion policies \citep{janner2022planning, wang2022diffusion, ren2024diffusion, chi2025diffusion}. However, these approaches are largely restricted to the offline setting. More recently, researchers have begun to explore diffusion policies in online RL settings, enabling continual interaction and policy improvement. One of the pioneering online diffusion-based RL methods is $Q$-score matching \citep{psenka2023learning}, which uncovers an intrinsic connection between the score function in Langevin dynamics and the action-value gradient $\nabla_a Q(s, a)$. Following these this fundamental ideas, more research works explore the connections between reverse-time diffusion process and the structure $\nabla_a Q(s,a)$ \citep{ma2025efficient, dong2025maximum}. Unlike the previous idea, DACER \citep{wang2024diffusion} treats the reverse diffusion process as a direct policy function and employs a Gaussian-mixture entropy regulator to adaptively balance exploration and exploitation.

Despite recent progress, most existing diffusion-based approaches remain confined to the offline reinforcement learning setting. Recent studies have further explored safe offline policy generation using diffusion models \cite{xiao2023safediffuser, zheng2024safe, cheng2025safe}, primarily focusing on learning policies from static datasets and enforcing safety constraints during offline optimization.
For example, \cite{xiao2023safediffuser} incorporates control barrier functions to constrain diffusion-sampled trajectories, while \cite{zhang2025constrained} leverages optimization-based techniques and conditional diffusion mechanisms to generate constraint-satisfying policies. Beyond policy generation from fixed data distributions, Lyapunov-guided safe diffusion \citep{cheng2025safe} regularizes diffusion-sampled trajectories using Lyapunov functions; however, this approach assumes access to known and differentiable system dynamics. Overall, safe diffusion-based RL in the online setting, where policies are continuously updated through interaction with the environment under safety constraints, remains largely underexplored.

\section{Limitation of Our Work and Future Direction}\label{append:limitation}
While ALGD demonstrates strong empirical performance across a range of tasks, we currently do not provide formal sample complexity bounds or finite-sample convergence guarantees for the coupled dynamics arising from diffusion-based policy optimization and primal–dual updates. Analyzing such guarantees is challenging due to the interaction between stochastic diffusion processes, function approximation, and dual variable updates, and we leave a rigorous theoretical treatment to future work. Moreover, our experimental evaluation is restricted to simulated environments. Extending ALGD to real-world, safety-critical applications, where factors such as partial observability, model mismatch, and system latency play a significant role, remains an important direction for future research.

\clearpage
\section{Review Limitations of Original Lagrangian Method} \label{append:pre}

Consider the optimization problem with inequality constraints:
\begin{equation}
\begin{aligned}
\min_{x} \quad & f(x) \\
\text{s.t.} \quad & g_i(x) \le 0, \quad i = 1, \dots, m.
\end{aligned}
\end{equation}

The \textit{Lagrangian} is defined as \cite{boyd2004convex}
\begin{equation}
\mathcal{L}(x, \lambda)
= f(x) + \sum_{i=1}^{m} \lambda_i g_i(x), 
\quad \lambda_i \ge 0.
\end{equation}

The corresponding \textit{dual problem} is written as
\begin{equation}
\max_{\lambda \succeq 0} \; \min_{x} \; 
\mathcal{L}(x, \lambda),
\end{equation}
where the inner minimization over \(x\) defines the dual function
\[
q(\lambda) = \min_{x} \mathcal{L}(x, \lambda).
\]

\paragraph{Iterative Dual Updates.}
A typical primal–dual method alternates between minimizing over \(x\) 
and ascending over \(\lambda\):
\begin{align}
x^{(k+1)} 
&= \arg\min_{x} \mathcal{L}(x, \lambda^{(k)}), \\
\lambda_i^{(k+1)} 
&= \left[ \lambda_i^{(k)} + \eta_k g_i(x^{(k+1)}) \right]_+, 
\quad i = 1, \dots, m,
\end{align}
where \( \eta_k > 0 \) is the step size and 
\([\,\cdot\,]_+ = \max(\,\cdot, 0)\) projects onto the nonnegative orthant.

This scheme first finds the primal variable minimizing the current Lagrangian, 
then updates the multipliers to penalize constraint violations, 
gradually approaching the saddle point of \( \mathcal{L}(x, \lambda) \).

\paragraph{Why Directly Using the Lagrangian in Diffusion Models Is Problematic.}
The limitation of the standard Lagrangian formulation originates from its gradient structure. 
Given
\[
\mathcal{L}(x,\lambda) = f(x) + \lambda g(x),
\]
the primal update depends on the gradient
\[
\nabla_x \mathcal{L}(x,\lambda) = \nabla_x f(x) + \lambda \nabla_x g(x).
\]
When $f(x)$ and $g(x)$ are complex and represented by neural networks, this gradient field can be highly irregular and noisy, leading to instability in the minimization over $x$. 
Meanwhile, the dual update
\[
\lambda^{(k+1)} = [\lambda^{(k)} + \eta_k g(x^{(k+1)})]_+
\]
relies on possibly inaccurate evaluations of $g(x)$, which often cause oscillations or drift in $\lambda$. 
Because the penalty term $\lambda g(x)$ is purely linear, it provides no curvature regularization and fails to stabilize $\nabla_x \mathcal{L}(x,\lambda)$, resulting in fluctuating gradients and unreliable convergence in practice.

\clearpage
\section{Theoretical Analysis} \label{append:theory}

Following the order of the main text, we provide the detailed proofs for all theoretical results presented in the paper. Each subsection corresponds to a specific result, including its assumptions, intermediate lemmas, and complete derivations. Specifically, we begin with the proof of Proposition~\ref{prop:Optimality Condition in score matching}, which establishes the optimality condition of the score function under the Lagrangian-guided diffusion formulation.  We then present the proof of Theorem~\ref{theorem:convexified_landscape}, demonstrating the convexification and invariance properties of the augmented Lagrangian. Finally, we provide the proof of Theorem~\ref{theorem:error_analysis}, which bounds the approximation discrepancy between the diffusion-generated policy and the target Boltzmann distribution. Together, these proofs form a coherent theoretical foundation supporting the proposed Augmented Lagrangian-Guided Diffusion framework.

\subsection{Analysis of Proposition~\ref{prop:Optimality Condition in score matching}} \label{append:Boltzmann Policy via variational form}

To derive the variational form of the policy in \eqref{eq:lagrangian_form}, consider maximizing the entropy-regularized Lagrangian objective at each state $s$:
\[
\max_{\pi(\cdot|s)} 
\mathbb{E}_{a\sim\pi(\cdot|s)}
\big[
Q^{\pi}(s,a) - \lambda (Q_c^{\pi}(s,a)-h)
- \beta \log \pi(a|s)
\big].
\]
Defining Lagrangian $\mathcal{L}(s,a;\lambda) = -Q^{\pi}(s,a) + \lambda (Q_c^{\pi}(s,a)-h)$, the functional objective becomes
\[
\mathcal{J}[\pi] 
= \int \pi(a|s)\!
\left(
 \mathcal{L}(s,a;\lambda)
+ \beta \log \pi(a|s)
\right) da - \eta(\int \pi(a|s)\,da - 1).
\]
Introducing a normalization constraint $\int \pi(a|s)\,da = 1$ with multiplier $\eta$ and setting the functional derivative to zero gives
\begin{align*}
    \frac{\delta \mathcal{J}}{\delta \pi(a|s)} & = \mathcal{L}(s,a;\lambda) + \beta (1 + \log \pi(a|s)) - \eta = 0.
\end{align*}
Then, we have 
\begin{align*}
        \log \pi^*(a|s) \propto - \frac{\mathcal{L}(s,a,\lambda) + \eta}{\beta}, 
\end{align*}
which yields the Boltzmann-form solution
% \[
% \pi^*(a|s)
% = \frac{\exp\!\left(\frac{\tilde{Q}(s,a;\lambda)}{\alpha}\right)}{
% Z(s)}, \quad
% Z(s) = \int_a \exp\!\left(\frac{\tilde{Q}(s,a;\lambda)}{\alpha}\right) da.
% \]
% Substituting $\tilde{Q}(s,a;\lambda) = \mathcal{L}(s,a,\lambda)$, we obtain
\begin{equation}
    \boxed{
\pi^*(a|s)
= 
\frac{\exp\!\left(-\frac{\mathcal{L}(s,a,\lambda)}{\beta}\right)}{Z(s)},
\quad 
Z(s) = \int_a \exp\!\left(-\frac{\mathcal{L}(s,a,\lambda)}{\beta}\right) da, \label{eq:lagrangian_form_apppend}
}
\end{equation}
indicating that the optimal policy follows a Boltzmann distribution, where fluctuations in the Lagrange multiplier $\lambda$ directly reshape the policy landscape.

% This alignment encourages the learned score network $\phi_{\theta}(s,a^{\tau},\tau)$ to approximate the energy gradient direction $-\nabla_a \mathcal{L}(s,a,\lambda)/\beta$, rather than matching it exactly. In this sense, the incremental refinement in policy gradient can be interpreted as an approximate form of score matching over the Boltzmann policy distribution. While not strictly equivalent, both objectives seek to align the model’s score field with the gradient of the underlying energy landscape, thereby establishing a conceptual bridge between Safe RL and diffusion modeling within an energy-based optimization framework.

\vspace{20pt}

In the following proposition, we present a method for estimating the exact score function for Lagrangian-guided diffusion under the VE SDE framework. We emphasize that the score function obtained via $Q$-score matching is generally \emph{misspecified}: it computes the score as $\nabla_a Q(s, a^{\tau})$ \citep{psenka2023learning}, which is only valid at the final denoising step, i.e., when $\tau = 0$.

\vspace{20pt}

\begin{proofpropwithframe} \label{append:proof_of_prop}
\textit{Proof of Proposition~\ref{prop:Optimality Condition in score matching}.} According to the definition of the VE SDE~\cite{chen2025solving}, the intermediate distribution $\pi^{\tau}(a^{\tau}|s)$ is generated as
\begin{align*}
    \pi^{\tau}(a^\tau|s) 
    &= \int \pi^0(a^0|s)\, \mathcal{N}\big(a^\tau; a^0, \sigma^2(\tau) I\big)\, da^0 
     = \big(\pi^0(a^0|s) * \mathcal{N}(0, \sigma^2(\tau)I) \big)(a^\tau),
\end{align*}
which follows directly from the forward diffusion as a Gaussian smoothing of $\pi^0(a|s)$. 

Differentiating under the integral sign yields the score of the intermediate distribution:
\begin{equation}
\begin{split}
     \nabla_a \log \pi^{\tau}(a^\tau|s) &=  \frac{\big(\nabla_a \pi^0(a^0|s) * \mathcal{N}(0, \sigma^2(\tau)I) \big)(a^\tau)}{\big(\pi^0(a^0|s) * \mathcal{N}(0, \sigma^2(\tau)I) \big)(a^\tau)} \\
     = & \frac{\mathbb{E}_{a^{0|\tau} \sim \mathcal{N}(a^{\tau}, \sigma^2(\tau) I)}
\big[ \nabla_a\pi(a^{0|\tau}|s) \big]}{\mathbb{E}_{a^{0|\tau} \sim \mathcal{N}(a^{\tau}, \sigma^2(\tau) I)}
\big[ \pi(a^{0|\tau}|s) \big]}. \label{eq:inter_score}
\end{split}
\end{equation}
The equality from the first line to the second line follows from the fact that differentiation (gradient) commutes with convolution \cite{brezis2011functional}. Substituting the Boltzmann form $\pi(a^0| s) \propto \exp\!\big(-\frac{\mathcal{L}(s,a^0,\lambda)}{\beta}\big)$ and $\nabla_a \log \pi(a^0|s) = - \frac{\nabla_a \mathcal{L}(s,a^0,\lambda)}{\beta}$ gives

\begin{equation}
\begin{split}
    & \frac{\mathbb{E}_{a^{0|\tau} \sim \mathcal{N}(a^{\tau}, \sigma^2(\tau) I)}
\big[ \nabla_a\pi(a^{0|\tau}|s) \big]}{\mathbb{E}_{a^{0|\tau} \sim \mathcal{N}(a^{\tau}, \sigma^2(\tau) I)}
\big[ \pi(a^{0|\tau}|s) \big]} \\
= &  \frac{\mathbb{E}_{a^{0|\tau} \sim \mathcal{N}(a^{\tau}, \sigma^2(\tau) I)}
\big[ - \frac{\exp\!\left(-\frac{\mathcal{L}(s,a^{0|\tau},\lambda)}{\beta}\right)}{\cancel{Z(s)}}  \cdot \frac{\nabla_a \mathcal{L}(s,a^0,\lambda)}{\beta}\big]}{\mathbb{E}_{a^{0|\tau} \sim \mathcal{N}(a^{\tau}, \sigma^2(\tau) I)}
\big[ \frac{\exp\!\left(-\frac{\mathcal{L}(s,a^{0|\tau},\lambda)}{\beta}\right)}{\cancel{Z(s)}} \big]} \\
 = & \mathbb{E}_{a^{0|\tau} \sim \mathcal{N}(a^{\tau}, \sigma^2(\tau) I)}
\bigg[ - \frac{ \exp\!\left(- \frac{\mathcal{L}(s,a^{0|\tau},\lambda)}{\beta}\right) }{\mathbb{E}_{a^{0|\tau} \sim \mathcal{N}(a^{\tau}, \sigma^2(\tau) I)}
\big[ \exp\!\left(- \frac{\mathcal{L}(s,a^{0|\tau},\lambda)}{\beta}\right) \big]} \frac{ \nabla_a \mathcal{L}(s, a^{0|\tau},\lambda) }{\beta} \bigg]. \label{eq:analytic_score}
\end{split}
\end{equation}

\paragraph{Connecting Score Matching with Policy Gradient.} 
In the Lagrangian formulation~\eqref{eq:lagrangian_form} of Safe RL, the optimal policy can be interpreted as a Boltzmann distribution defined by the energy function $\mathcal{L}(s,a,\lambda)$, i.e., $\pi(a|s) \propto \exp\!\left(-\frac{\mathcal{L}(s,a,\lambda)}{\beta}\right)$. Consequently, learning the optimal policy is equivalent to modeling the probability density induced by this Boltzmann distribution. From the diffusion perspective, the policy generation process 
\[
\underbrace{\pi^{K}(a^{K}|s)}_{\text{Gaussian prior}} 
\xrightarrow{\pi_\theta^{K}(a^{K-1}|a^{K},s)} 
\pi^{K-1}(a^{K-1}|s)
\xrightarrow{\pi_\theta^{K-1}(a^{K-2}|a^{K-1},s)} 
\cdots 
\xrightarrow{\pi_\theta^{1}(a^{0}|a^{1},s)} 
\underbrace{\pi^{0}(a^{0}|s)}_{\text{target distribution}}
\]
can be viewed as a reverse diffusion process that progressively transforms a Gaussian prior into the target Boltzmann policy. Each incremental refinement step, represented by the conditional transition $\pi_{\theta}^{\tau-1}(a^{\tau-1}|a^{\tau},s)$, is guided by the score function $\nabla_a \log \pi^{\tau}(a^{\tau}|s)$, which governs the denoising dynamics. Since diffusion models are formulated via the reverse-time SDE in ~\eqref{eq:langevin_dynamics}, optimizing the policy gradient term $\nabla_{\theta}\log\pi_{\theta}(a^{\tau-1}|a^{\tau},s)$ at each refinement step can be interpreted as aligning the learned score field $\phi_\theta(s,a^{\tau},\tau)$ with the intermediate score $\nabla_a \log \pi^{\tau}(a^{\tau}|s)$. In this sense, policy optimization approximates score matching within the reverse diffusion framework, linking Safe RL and diffusion modeling through a unified energy-based objective. 

However, obtaining the exact form of the score function $\nabla_a \log \pi^{\tau}(a^{\tau}|s)$ is generally intractable, as it requires access to the intermediate marginal distribution $\pi^{\tau}(a^{\tau}|s)$ along the diffusion trajectory. To build a more practical and theoretically grounded connection between the score field and the Lagrangian energy $\mathcal{L}(s,a,\lambda)$, we derive in \eqref{eq:analytic_score} an analytical form that explicitly links the two, revealing how the energy structure of Safe RL shapes the dynamics of the reverse-time SDE in diffusion-based policy optimization.

Therefore, the policy optimization in \eqref{eq:policy_gradient} achieves its optimal value when the score field satisfies
\begin{align*}
    \phi_*(s,a^\tau,\tau)
    = \mathbb{E}_{a^{0|\tau} \sim \mathcal{N}(a^{\tau}, \sigma^2(\tau) I)}
    \bigg[
        - \frac{
            \exp\!\left(- \frac{\mathcal{L}(s,a^{0|\tau},\lambda)}{\beta}\right)
        }{
            \mathbb{E}_{a^{0|\tau} \sim \mathcal{N}(a^{\tau}, \sigma^2(\tau) I)}
            \big[
                \exp\!\left(- \frac{\mathcal{L}(s,a^{0|\tau},\lambda)}{\beta}\right)
            \big]
        }
        \frac{ \nabla_a \mathcal{L}(s,a^{0|\tau},\lambda) }{\beta}
    \bigg].
\end{align*}
This expression characterizes the \emph{energy-guided score field}, where the gradient of the Lagrangian directly modulates the reverse diffusion dynamics that generate the policy.
\end{proofpropwithframe}

% \clearpage
\subsection{Proof of Theorem~\ref{theorem:convexified_landscape}} \label{Append:proof_theorem_1}

\vspace{20pt}

\begin{prooftheoremwithframe}

\paragraph{Proof.}
We prove the claim in two parts.
Part~(a) shows that the quadratic penalty term in the augmented Lagrangian locally increases curvature in the action space, leading to improved conditioning of the energy landscape.
Part~(b) shows that, at optimality, the augmented Lagrangian preserves both the optimal objective value and the optimal policy of the original constrained problem.

\medskip
\noindent\textbf{(a) Local Landscape Convexification.}

Recall the augmented Lagrangian
\[
\mathcal{L}_A(s,a,\lambda)
=
- Q^{\pi}(s,a)
+
\frac{1}{2\rho}
\Big(
[\lambda+\rho(Q_c^{\pi}(s,a)-h)]_+^2 - \lambda^2
\Big),
\]
where $\rho>0$.
We analyze the local curvature of $\mathcal{L}_A$ with respect to the action variable $a$.

We first consider the region where the constraint is locally active, i.e.,
\[
\lambda+\rho\big(Q_c^{\pi}(s,a)-h\big)>0.
\]
In this case, the hinge operator reduces to the identity, and the augmented Lagrangian admits the equivalent form
\[
\mathcal{L}_A(s,a,\lambda)
=
- Q^{\pi}(s,a)
+ \lambda\big(Q_c^{\pi}(s,a)-h\big)
+ \frac{\rho}{2}\big(Q_c^{\pi}(s,a)-h\big)^2
=
\mathcal{L}(s,a,\lambda)
+ \frac{\rho}{2}\big(Q_c^{\pi}(s,a)-h\big)^2,
\]
where $\mathcal{L}(s,a,\lambda)=-Q^{\pi}(s,a)+\lambda(Q_c^{\pi}(s,a)-h)$ denotes the standard Lagrangian.

Taking the Hessian with respect to $a$ yields
\begin{align*}
\nabla_a^2 \mathcal{L}_A(s,a,\lambda)
&=
\nabla_a^2 \mathcal{L}(s,a,\lambda)
+ \rho\,\nabla_a Q_c^{\pi}(s,a)\nabla_a Q_c^{\pi}(s,a)^\top
+ \rho\big(Q_c^{\pi}(s,a)-h\big)\nabla_a^2 Q_c^{\pi}(s,a).
\end{align*}

By assumption, $\nabla_a^2 Q_c^{\pi}$ is with smooth structure.
Therefore, the last term is of order
$\mathcal{O}(|Q_c^{\pi}(s,a)-h|)$.
Consequently, in a neighborhood of the constraint boundary where
$|Q_c^{\pi}(s,a)-h|$ is sufficiently small, the dominant curvature contribution introduced by the augmented term is
\[
\rho\,\nabla_a Q_c^{\pi}(s,a)\nabla_a Q_c^{\pi}(s,a)^\top,
\]
which is positive semidefinite.
Hence, in this region,
\[
\nabla_a^2 \mathcal{L}_A(s,a,\lambda)
=
\nabla_a^2 \mathcal{L}(s,a,\lambda)
+
\rho\,\nabla_a Q_c^{\pi}(s,a)\nabla_a Q_c^{\pi}(s,a)^\top
+
\mathcal{O}(|Q_c^{\pi}(s,a)-h|),
\]
showing that the quadratic penalty strictly increases the local curvature of the energy landscape along the constraint-gradient directions.
This curvature enhancement improves local conditioning and results in smoother gradients, which stabilizes denoising dynamics and score-based optimization.

When the constraint is inactive, i.e.,
$Q_c^{\pi}(s,a)\le h - \frac{\lambda}{\rho}$,
the hinge term vanishes and the augmented Lagrangian the quadratic penalty term vanishes since $[\lambda+\rho\big(Q_c^{\pi}(s,a)-h\big)>0 ]_+ = 0$.
In this case, no additional curvature is introduced and the local geometry remains unchanged.
Thus, the convexifying effect of the augmented Lagrangian is both local and conditional.

\medskip \noindent\textbf{Discussion (When does local convexification occur?).} The above analysis shows that the convexifying effect of the augmented Lagrangian is \emph{conditional} and \emph{local}. Specifically, local convexification occurs only when the constraint is active, i.e., $\lambda+\rho(Q_c^{\pi}(s,a)-h)>0$, and in a neighborhood of the constraint boundary where $|Q_c^{\pi}(s,a)-h|$ is not large. In this regime, the quadratic penalty introduces a positive semidefinite curvature correction $\rho\,\nabla_a Q_c^{\pi}\nabla_a Q_c^{\pi}{}^\top$, which locally reshapes the energy landscape and biases diffusion trajectories toward the feasible action set. 

In contrast, when the constraint is inactive ($Q_c^{\pi}(s,a)\le h - \frac{\lambda}{\rho}$), the quadratic penalty vanishes and $\mathcal{L}_A$ coincides with the original Lagrangian $\mathcal{L}$. Therefore, no additional curvature is introduced and the local geometry remains unchanged. This selective convexification ensures that curvature enhancement is applied precisely where it is needed to prevent sampling unsafe actions, without over-regularizing the interior of the feasible region.

\medskip
\noindent\textbf{(b) Invariance of the Optimal Policy and Objective.}

Since the original constrained problem is feasible and admits
a primal-dual optimal solution satisfying the KKT conditions (e.g.,
under Slater’s condition), we proceed to show that the augmented
Lagrangian preserves both the optimal objective value and the optimal
policy of the original constrained problem.

Let $(\pi^*, \lambda^*)$ denote a primal-dual solution satisfying the Karush--Kuhn--Tucker (KKT) conditions of the original problem~\citep{nocedal2006numerical}.
In particular, these conditions imply:
\begin{itemize}
    \item \emph{Primal feasibility:}
    $Q_c^{\pi^*}(s,a)\le h$ for $a\sim\pi^*$;
    \item \emph{Dual feasibility:}
    $\lambda^*\ge 0$;
    \item \emph{Complementary slackness:}
    $\lambda^*(Q_c^{\pi^*}(s,a)-h)=0$ for $a\sim\pi^*$.
\end{itemize}

From complementary slackness, it follows that

\[
[\lambda^*+\rho(Q_c^{\pi^*}(s,a)-h)]^2_+ - (\lambda^*)^2=0
\quad \text{for } a\sim\pi^*.
\]
The augmented Lagrangian therefore yields
\[
\mathcal{L}_A(s,a,\lambda^*)
=
\mathcal{L}(s,a,\lambda^*)
\quad
\forall a\in\mathrm{supp}(\pi^*(\cdot|s)), \ \text{with} \ Q^{\pi^*}_c(s,a) \leq h.
\]

As a consequence, the augmented objective preserves the optimal value:
\[
\min_{\pi}\,
\mathbb{E}_{s,a\sim\pi}
\big[
\mathcal{L}_A(s,a,\lambda^*)
\big]
=
\min_{\pi}\,
\mathbb{E}_{s,a\sim\pi}
\big[
\mathcal{L}(s,a,\lambda^*)
\big].
\]
Moreover, since the Boltzmann distribution induced by the energy function depends only on the value of the Lagrangian, the optimal policy distribution remains unchanged:
\[
\pi_A^*(a|s)
\propto
\exp\!\Big(-\tfrac{\mathcal{L}_A(s,a,\lambda^*)}{\beta}\Big)
=
\exp\!\Big(-\tfrac{\mathcal{L}(s,a,\lambda^*)}{\beta}\Big)
=
\pi^*(a|s).
\]

This completes the proof.
The augmented Lagrangian reshapes the local energy landscape away from optimality to improve conditioning and sampling stability, while preserving both the optimal objective value and the optimal policy of the original constrained problem.

\end{prooftheoremwithframe}

\clearpage 
\subsection{Proof of Theorem \ref{theorem:error_analysis} and Corollary~\ref{thm:reward_score_gap}} \label{Append:proof_theorem_2}

\vspace{12pt} 

\begin{lemma}[Monte Carlo Score Estimation Error]
\label{lemma:montecarlo_score_error}
Assume that $\exp\!\big(-\tfrac{\mathcal{L}_A(s,a,\lambda)}{\beta}\big)$ and 
$\nabla_a \exp\!\big(-\tfrac{\mathcal{L}_A(s,a,\lambda)}{\beta}\big)$ 
are sub-Gaussian random variables over the Gaussian kernel 
$a^{0,(i)}\!\sim\!\mathcal{N}(a^\tau,\sigma^2(\tau)I)$.  
Let the Monte Carlo estimator of the score be
\[
\tilde{\phi}_A(s,a^\tau,\tau)
= 
\sum_{i=1}^N 
- \frac{w_i}{\beta} \nabla_a \mathcal{L}_A(s, a^{0,(i)}, \lambda),
\quad
w_i =
\frac{
  \exp\!\big[-\tfrac{1}{\beta}\mathcal{L}_A(s,a^{0,(i)},\lambda)\big]
}{
  \sum_{j=1}^N \exp\!\big[-\tfrac{1}{\beta}\mathcal{L}_A(s,a^{0,(j)},\lambda)\big]}.
\]
Then there exists a constant $c(s,a^\tau)$ such that, 
with probability at least $1-\delta$,
\begin{equation}
\label{eq:score_error_bound_lemma}
\|\tilde{\phi}_A(s,a^\tau,\tau) - \phi_A(s,a^\tau,\tau)\|
\le
\frac{c(s,a^\tau)\sqrt{\log(2/\delta)}}{\sqrt{N}},
\end{equation}
where $\phi_A(s,a^\tau,\tau)$ denotes the true score function induced by augmented Lagrangian.
\end{lemma}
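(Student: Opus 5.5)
The plan is to write the estimator as a ratio of two empirical means and treat it as a self-normalized importance sampling estimator. Set $f(a) = \exp(-\mathcal{L}_A(s,a,\lambda)/\beta)$ and $g(a) = \nabla_a f(a) = -\tfrac{1}{\beta} f(a)\nabla_a \mathcal{L}_A(s,a,\lambda)$. Then
\[
\tilde{\phi}_A = \frac{\hat{G}_N}{\hat{F}_N}, \qquad
\hat{G}_N = \frac{1}{N}\sum_{i=1}^N g(a^{0,(i)}), \qquad
\hat{F}_N = \frac{1}{N}\sum_{i=1}^N f(a^{0,(i)}),
\]
with $a^{0,(i)} \sim \mathcal{N}(a^\tau,\sigma^2(\tau)I)$. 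By the computation in the proof of Proposition~\ref{prop:Optimality Condition in score matching}, applied with $\mathcal{L}_A$ in place of $\mathcal{L}$, the true score is $\phi_A = G/F$, where $G = \mathbb{E}[g]$ and $F = \mathbb{E}[f] > 0$. So the task reduces to a ratio-concentration argument.

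\textbf{Step 1: concentrate numerator and denominator separately.} Because $f$ and $g$ are assumed sub-Gaussian under the Gaussian kernel, Hoeffding-type inequalities apply. With probability at least $1-\delta/2$ each,
\[
|\hat{F}_N - F| \le \frac{\kappa_f\sqrt{2\log(4/\delta)}}{\sqrt{N}}, \qquad
\|\hat{G}_N - G\| \le \frac{\kappa_g\sqrt{2\log(4/\delta)}}{\sqrt{N}}.
\]
Here $\kappa_f, \kappa_g$ are the sub-Gaussian norms. For the vector $g$, I would either apply the scalar bound coordinatewise with a union bound over the $m$ coordinates (absorbing $\sqrt{m}$ and $\log m$ into the constant), or use a norm-sub-Gaussian vector concentration inequality. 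A union bound gives both events simultaneously with probability $1-\delta$, and $\log(4/\delta) \le 2\log(2/\delta)$ up to constants.

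\textbf{Step 2: linearize the ratio.} Use the identity
\[
\frac{\hat{G}_N}{\hat{F}_N} - \frac{G}{F} = \frac{\hat{G}_N - G}{\hat{F}_N} - \frac{G}{F}\cdot\frac{\hat{F}_N - F}{\hat{F}_N},
\]
which gives
\[
\|\tilde{\phi}_A - \phi_A\| \le \frac{\|\hat{G}_N - G\| + \|\phi_A\|\,|\hat{F}_N - F|}{\hat{F}_N}.
\]

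\textbf{Step 3: keep the denominator away from zero.} I expect this to be the main obstacle. If $N$ is large enough that the deviation bound in Step 1 is at most $F/2$, then $\hat{F}_N \ge F/2$ on the good event, and
\[
\|\tilde{\phi}_A - \phi_A\| \le \frac{2\bigl(\kappa_g + \|\phi_A\|\kappa_f\bigr)\sqrt{2\log(4/\delta)}}{F\sqrt{N}}.
\]
This identifies
\[
c(s,a^\tau) \asymp \frac{\kappa_g + \|\phi_A(s,a^\tau,\tau)\|\,\kappa_f}{\mathbb{E}\bigl[\exp(-\mathcal{L}_A/\beta)\bigr]}.
\]
In the small-$N$ regime, where $N \lesssim \kappa_f^2\log(2/\delta)/F^2$, the bound can still be made to hold by enlarging the constant. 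Since $\|\tilde{\phi}_A\|$ is a convex combination of the gradients $\tfrac{1}{\beta}\nabla_a\mathcal{L}_A$, it is bounded by $\sup\|\nabla_a \mathcal{L}_A\|/\beta$, which is finite by the Lipschitz and boundedness assumptions. The error is then at most a constant, and that constant is dominated by $c\sqrt{\log(2/\delta)/N}$ once $c$ is taken large enough.

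One point needs care. $F$ is strictly positive because $\mathcal{L}_A$ is bounded on the compact set $\mathcal{A}$, but $c$ can be exponentially large in $\sup\mathcal{L}_A/\beta$. I would state this dependence explicitly, since it is consistent with the statement's $c(s,a^\tau)$ depending on the state and the noisy action.
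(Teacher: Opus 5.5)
Your proposal is correct and is essentially the paper's argument. Both write $\tilde{\phi}_A$ as a ratio of empirical means, apply sub-Gaussian Hoeffding bounds to the numerator and denominator, split $\hat{G}_N/\hat{F}_N - G/F$ the same way, and close with a union bound. The differences are minor. The paper bounds the denominator deterministically through an added floor $\exp(-\mathcal{L}_A/\beta)\ge m>0$ and a gradient ceiling $M$, so $c\ge 1/m + M/m^2$ and there is no small-$N$ case; you instead use $\hat{F}_N\ge F/2$ on the good event plus a boundedness fallback. Your centering at the kernel means $F$ and $G$, with $\delta$ split to reach $1-\delta$, is also more careful than the paper's write-up, which centers at $f(a^\tau)$ and $\nabla f(a^\tau)$ and ends at probability $1-2\delta$.
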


\textit{Proof.} Our target is to estimate $\nabla_a\log \pi_A^\tau(a^\tau|s)$ and $\mathbb{E}_{a^{0|\tau} \sim \mathcal{N}(a^{\tau}, \sigma^2(\tau) I)} [\exp(- \frac{\mathcal{L}_A(s, a^{0|\tau}, \lambda)}{\beta})] \propto \pi^{\tau}_A(a^{\tau}|s)$ . 
When assume that the random variables $\exp\!\big(-\tfrac{\mathcal{L}_A(s,a^{0,(i)},\lambda)}{\beta}\big)$ and 
$\nabla_a \exp\!\big(-\tfrac{\mathcal{L}_A(s,a^{0,(i)},\lambda)}{\beta}\big)$ with $a^{0,(i)}\!\sim\!\mathcal{N}(a^\tau,\sigma^2(\tau)I)$ are sub-Gaussian~\footnote{Although Gaussian kernels with infinite support are used for score estimation, the effective domain of integration is restricted to the compact action space $\mathcal A$. As a result, the induced Boltzmann distribution is supported on a bounded domain and is therefore sub-Gaussian. The additional assumption that $\mathcal L_A(s,a)$ has bounded gradient on $\mathcal A$ ensures that the corresponding score function is also sub-Gaussian.
}, then by Hoeffding’s inequality on sub-Gaussian random variables \cite{rigollet2023high}, we have that there exists a constant $C>0$ such that for any $\delta>0$ with probability $1-\delta$ 
\begin{equation}
    | \frac{1}{N} \sum_{i=1}^N \exp\!\big(-\tfrac{\mathcal{L}_A(s,a^{0,(i)},\lambda)}{\beta}\big) - \exp\!\big(-\tfrac{\mathcal{L}_A(s,a^{\tau},\lambda)}{\beta}\big)  | \leq C \frac{\sqrt{\log(2/\delta)}}{\sqrt{N}}
\end{equation}
and 
\begin{equation}
    \| \frac{1}{N} \sum_{i=1}^N \nabla_a \exp\!\big(-\tfrac{\mathcal{L}_A(s,a^{0,(i)},\lambda)}{\beta}\big) - \nabla_a\exp\!\big(-\tfrac{\mathcal{L}_A(s,a^{\tau},\lambda)}{\beta}\big)  \| \leq C \frac{\sqrt{\log(2/\delta)}}{\sqrt{N}}. 
\end{equation}

Substituting the true score function, we have the following result with probability $1-\delta$ 

\begin{equation}
\begin{split}
    & \|\tilde{\phi}(s,a^\tau,\tau) - \phi_A(s,a^\tau,\tau)\| \\
    = & \left\|  \frac{ \frac{1}{N} \sum_{i=1}^N \nabla_a \exp\!\big(-\tfrac{\mathcal{L}_A(s,a^{0,(i)},\lambda)}{\beta}\big)}{\frac{1}{N} \sum_{i=1}^N \exp\!\big(-\tfrac{\mathcal{L}_A(s,a^{0,(i)},\lambda)}{\beta}\big)} - \frac{\nabla_a \exp\!\big(-\tfrac{\mathcal{L}_A(s,a^{\tau},\lambda)}{\beta}\big)}{\exp\!\big(-\tfrac{\mathcal{L}_A(s,a^{\tau},\lambda)}{\beta}\big)} \right\| \\
    = &  \left\|  \frac{ \frac{1}{N} \sum_{i=1}^N \nabla_a \exp\!\big(-\tfrac{\mathcal{L}_A(s,a^{0,(i)},\lambda)}{\beta}\big) \exp\!\big(-\tfrac{\mathcal{L}_A(s,a^{\tau},\lambda)}{\beta}\big) -  \frac{1}{N} \sum_{i=1}^N \exp\!\big(-\tfrac{\mathcal{L}_A(s,a^{0,(i)},\lambda)}{\beta}\big) \nabla_a \exp\!\big(-\tfrac{\mathcal{L}_A(s,a^{\tau},\lambda)}{\beta}\big) }{\exp\!\big(-\tfrac{\mathcal{L}_A(s,a^{\tau},\lambda)}{\beta}\big) \frac{1}{N} \sum_{i=1}^N \exp\!\big(-\tfrac{\mathcal{L}_A(s,a^{0,(i)},\lambda)}{\beta}\big) } \right\| \\
    & \leq \left\| \frac{\frac{1}{N} \sum_{i=1}^N \nabla_a \exp\!\big(-\tfrac{\mathcal{L}_A(s,a^{0,(i)},\lambda)}{\beta}\big) - \nabla_a \exp\!\big(-\tfrac{\mathcal{L}_A(s,a^{\tau},\lambda)}{\beta}\big) }{\sum_{i=1}^N \exp\!\big(-\tfrac{\mathcal{L}_A(s,a^{0,(i)},\lambda)}{\beta}\big)} \right\| \\ 
    & + \left \| \nabla_a \exp\!\big(-\tfrac{\mathcal{L}_A(s,a^{\tau},\lambda)}{\beta}\big) \cdot \frac{\exp\!\big(-\tfrac{\mathcal{L}_A(s,a^{\tau},\lambda)}{\beta}\big) -\frac{1}{N} \sum_{i=1}^N \exp\!\big(-\tfrac{\mathcal{L}_A(s,a^{0,(i)},\lambda)}{\beta}\big)}{\exp\!\big(-\tfrac{\mathcal{L}_A(s,a^{\tau},\lambda)}{\beta}\big) \sum_{i=1}^N \frac{1}{N}\exp\!\big(-\tfrac{\mathcal{L}_A(s,a^{0,(i)},\lambda)}{\beta}\big)} \right\|.  \label{eq:error_bound}
\end{split}
\end{equation}
Under regularity assumption $\exp(-\frac{\mathcal{L}_A(s, a, \lambda)}{\beta}) \geq m > 0$ for all $(s, a) \in \mathcal{S\times A}$ and $\sup_{s,a} \| \nabla_a \exp(- \frac{\mathcal{L}_A(s, a, \lambda)}{\beta}) \|\leq M < \infty $ , then combining to the inequality~\eqref{eq:error_bound} and applying union bound to the two concentration inequalities, there exists a universal constant $c(s,a^{\tau}) \geq  \frac{1}{m} + \frac{M}{m^2}$ such that we have with probability $1-2\delta$
\begin{equation}
    \|\tilde{\phi}(s,a^\tau,\tau) - \phi_A(s,a^\tau,\tau)\|
\le
\frac{c(s,a^\tau)\sqrt{\log(2/\delta)}}{\sqrt{N}}.
\end{equation}

\begin{lemma}[Pathwise KL Divergence under Drift Perturbation via Girsanov] \label{lemma:Girsanov}
Let \( x^\tau \) satisfy the Itô SDEs
\[
\begin{aligned}
dx^\tau &= b_1(x^\tau)\,d\tau + \sigma(x^\tau)\,dB^\tau, \\
dy^\tau &= b_2(y^\tau)\,d\tau + \sigma(y^\tau)\,dB^\tau,
\end{aligned}
\]
where \( B^\tau \) is a standard Brownian motion on \([0,T]\), and the diffusion matrix \( \sigma(x) \) is uniformly non-degenerate and identical for both dynamics.  
Let \( p_1 \) and \( p_2 \) denote the path measures on \( C([0,T];\mathbb{R}^d) \) induced by these two SDEs.  
Assume that \( b_1, b_2, \sigma \) satisfy the usual Lipschitz and linear growth conditions ensuring strong existence and uniqueness of solutions.

Then \( p_2 \) is absolutely continuous with respect to \( p_1 \), and the Kullback--Leibler divergence between the two path measures satisfies
\begin{equation}
D_{\text{KL}}(p_2 \,\|\, p_1)
= \frac{1}{2}\, \mathbb{E}_{p_2}\!\left[
\int_0^T
\big\|
\sigma^{-1}\!\big(b_2(x^\tau) - b_1(x^\tau)\big)
\big\|^2 \, d\tau
\right].
\label{eq:girsanov-kl}
\end{equation}
\end{lemma}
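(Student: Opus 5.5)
The plan is to apply Girsanov's theorem: change measure from $p_1$ to $p_2$ via the Radon--Nikodym density and take the expectation of its logarithm under $p_2$.

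\textbf{Step 1 (the density).} Set $u(x) := \sigma^{-1}(x)\big(b_2(x)-b_1(x)\big)$. Uniform non-degeneracy of $\sigma$, together with the Lipschitz and linear-growth conditions, gives $\|u(x)\| \le C(1+\|x\|)$. Strong solutions have finite second moments on $[0,T]$. Hence $\mathbb{E}_{p_2}\!\int_0^T\|u(y^\tau)\|^2 d\tau<\infty$. Both measures are taken with the same initial law, which the statement leaves implicit; I would make this explicit.

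Under this bound the exponential
\[
Z_T=\exp\!\Big(\int_0^T u(x^\tau)^\top dB^\tau-\tfrac12\int_0^T\|u(x^\tau)\|^2d\tau\Big)
\]
is a true martingale. Girsanov's theorem then says that under $dQ=Z_T\,dp_1$ the process $\tilde B^\tau=B^\tau-\int_0^\tau u\,d\tau'$ is a Brownian motion. Substituting into the first SDE gives $dx^\tau=b_2\,d\tau+\sigma\,d\tilde B^\tau$. By uniqueness in law (a consequence of strong uniqueness), $Q=p_2$, so $p_2\ll p_1$ and $dp_2/dp_1=Z_T$.

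\textbf{Step 2 (the KL divergence).} Compute $D_{\mathrm{KL}}(p_2\|p_1)=\mathbb{E}_{p_2}[\log Z_T]$. Under $p_2$, write $dB^\tau=d\tilde B^\tau+u\,d\tau$, where $\tilde B$ is now a $p_2$-Brownian motion. This gives
\[
\log Z_T=\int_0^T u^\top d\tilde B^\tau+\tfrac12\int_0^T\|u\|^2d\tau .
\]
The stochastic integral is a genuine martingale with zero mean because its integrand is square-integrable by Step 1. Taking the $p_2$-expectation yields \eqref{eq:girsanov-kl}.

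\textbf{Main obstacle.} The delicate step is justifying that $Z_T$ is a true martingale rather than merely a local martingale. Novikov's condition can fail under only linear growth.

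\emph{First attempt:} localize with stopping times $\tau_n=\inf\{\tau:\|x^\tau\|\ge n\}\wedge T$. On $[0,\tau_n]$ the drift $u$ is bounded, so Novikov's condition holds and the stopped exponentials are martingales. Apply Girsanov on each $[0,\tau_n]$, identify the stopped laws, and pass $n\to\infty$ using non-explosion of both SDEs.

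\emph{Fallback:} use Beneš' criterion for linear-growth drifts.

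\textbf{KL identity on the localized pieces.} Compute $\mathbb{E}_{p_2}[\log Z_{\tau_n}]$ on each stopped interval. Then let $n\to\infty$, using monotone convergence for the $\tfrac12\int\|u\|^2$ term. Lower semicontinuity of KL under the increasing filtrations handles the remainder. The moment bound from Step 1 makes the limits agree, so no extra integrability assumption is needed.
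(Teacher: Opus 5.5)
Your proposal is correct and follows the same route as the paper: form the Girsanov density $dp_2/dp_1=Z_T$, then compute $D_{\mathrm{KL}}(p_2\|p_1)=\mathbb{E}_{p_2}[\log Z_T]$; via localization and non-explosion of the $b_2$-equation, you also supply the true-martingale property that the paper's proof takes for granted. Your Step 2 is in fact the correct version of the paper's last step: the paper asserts that $\int_0^T u^\top dB^\tau$, with $B$ the $p_1$-Brownian motion, has zero mean under $p_2$, which is false because $dB^\tau=d\tilde B^\tau+u\,d\tau$ under $p_2$ gives it mean $\mathbb{E}_{p_2}\int_0^T\|u\|^2\,d\tau$, and it is precisely this drift term that turns the $-\tfrac12$ in $\log Z_T$ into the $+\tfrac12$ of the claimed identity.
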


\textit{Proof.}
By Girsanov’s theorem, since the diffusion coefficients of the two SDEs coincide and \( \sigma(x) \) is invertible, the path measure \( p_2 \) is absolutely continuous with respect to \( p_1 \).  
The Radon--Nikodym derivative of \( p_2 \) with respect to \( p_1 \) on the path space is given by
\[
\frac{dp_2}{dp_1}
= \exp\!\left(
\int_0^T
(\sigma^{-1}(b_2 - b_1))^\top \, dB^\tau
- \frac{1}{2}\int_0^T
\|\sigma^{-1}(b_2 - b_1)\|^2 \, d\tau
\right),
\]
where all quantities are evaluated along the trajectory \( x^\tau \) under \( p_1 \).

By definition of the Kullback--Leibler divergence,
\[
D_{\text{KL}}(p_2 \,\|\, p_1)
= \mathbb{E}_{p_2}\!\left[\log\!\left(\frac{dp_2}{dp_1}\right)\right].
\]
Under \( p_2 \), the stochastic integral
\(
\int_0^T (\sigma^{-1}(b_2 - b_1))^\top \, dB^\tau
\)
has zero mean.  
Therefore,
\[
D_{\text{KL}}(p_2 \,\|\, p_1)
= \frac{1}{2}\, \mathbb{E}_{p_2}\!\left[
\int_0^T
\|\sigma^{-1}(b_2(x^\tau) - b_1(x^\tau))\|^2 \, d\tau
\right],
\]
which establishes the result.

\vspace{20pt} \begin{prooftheoremwithframe}
\textit{Proof.}
Returning to our setting, we adopt a \emph{same-path construction} for the
reverse-time diffusion process.
Specifically, we consider a single stochastic trajectory
$a^\tau \in C([0,K];\mathbb{R}^d)$ defined on a common filtered probability
space and driven by the same Brownian motion $B^\tau$.
Different policies are induced by evaluating different drift fields
along this same path.

Concretely, the reverse-time dynamics take the unified form
\[
da^\tau
=
- \frac{d\sigma^2(\tau)}{d\tau}\, \phi(s,a^\tau,\tau)\, d\tau
+ \sqrt{\tfrac{d\sigma^2(\tau)}{d\tau}}\, dB^\tau,
\]
where the drift field $\phi$ is instantiated as one of
$\tilde{\phi}_A$, $\hat{\phi}_A$, or $\phi_*$.
The diffusion coefficient
$\sqrt{\tfrac{d\sigma^2(\tau)}{d\tau}}$ is assumed to be uniformly
non-degenerate and identical across all cases.

Here,
$\tilde{\phi}_A(s,a^\tau,\tau)$ denotes a Monte Carlo score estimator
constructed from the augmented Lagrangian $\mathcal{L}_A$, which induces
the approximate policy $\tilde{\pi}_A^0$;
$\hat{\phi}_A(s,a^\tau,\tau)$ denotes the exact score field induced by
$\mathcal{L}_A$, which induces the intermediate policy $\pi_A^0$;
and $\phi_*(s,a^\tau,\tau)$ corresponds to the optimal score associated
with the true Lagrangian $\mathcal{L}(s,a,\lambda^*)$, inducing the target
Boltzmann policy $\pi^*$ (or $\pi^0(a^0|s)$ as clean distribution).

\vspace{3pt}
\noindent
By Lemma~\ref{lemma:Girsanov} and the same-path construction above,
the pathwise KL divergence between the distributions induced by
$\tilde{\phi}_A$ and $\phi_*$ admits the representation
\begin{equation} \label{eq:path_integral_monte_carlo}
\begin{aligned}
D_{\mathrm{KL}}\!\big( \pi^0(a^0|s)\,\|\, \tilde{\pi}_A^0(a^0|s)\big)
&=
\frac{1}{2}\,
\mathbb{E}_{\pi^0(a^0|s)}
\!\left[
\int_0^K
\Big\|
\Big(\sqrt{\tfrac{d\sigma^2(\tau)}{d\tau}}\Big)^{-1} \times \frac{d\sigma^2(\tau)}{d\tau}
\big(
\tilde{\phi}_A(s,a^\tau,\tau)
-
\phi_*(s,a^\tau,\tau)
\big)
\Big\|^2
\, d\tau
\right]
\\ &= \frac{1}{2}\,
\mathbb{E}_{\pi^0(a^0|s)}
\!\left[
\int_0^K
\Big\|
\sqrt{\tfrac{d\sigma^2(\tau)}{d\tau}}
\big(
\tilde{\phi}_A(s,a^\tau,\tau)
-
\phi_*(s,a^\tau,\tau)
\big)
\Big\|^2
\, d\tau
\right]
\end{aligned}
\end{equation}

\vspace{3pt}
\noindent
To isolate different sources of approximation error, we decompose the
drift discrepancy as
\[
\tilde{\phi}_A - \phi_*
=
(\tilde{\phi}_A - \hat{\phi}_A)
+
(\hat{\phi}_A - \phi_*),
\]
and apply the inequality $\|x+y\|^2 \le 2\|x\|^2 + 2\|y\|^2$, which is
valid due to the quadratic form of the pathwise KL divergence.
This yields
\begin{equation}
\label{eq:path_kl_decomposition}
\begin{aligned}
D_{\mathrm{KL}}\!\big( \pi^0\,\|\,  \tilde{\pi}_A^0\big)
&\le
\mathbb{E}_{\pi^0}
\int_0^K
\Big\|
\sqrt{\tfrac{d\sigma^2(\tau)}{d\tau}}
\big(
\tilde{\phi}_A - \hat{\phi}_A
\big)
\Big\|^2
\, d\tau \\
&\quad
+
\underbrace{\mathbb{E}_{\pi^0}
\int_0^K
\Big\|
\sqrt{\tfrac{d\sigma^2(\tau)}{d\tau}}
\big(
\hat{\phi}_A - \phi_*
\big)
\Big\|^2
\, d\tau}_{ = 2 D_{\mathrm{KL}}\!\big( \pi^0\,\|\,  \pi_A^0\big)} .
\end{aligned}
\end{equation}

\vspace{3pt}
\noindent
The first term in~\eqref{eq:path_kl_decomposition} captures the error
induced by Monte Carlo score estimation, and the second term is equal to $2 D_{\mathrm{KL}}( \pi^0\,\|\,  \pi_A^0)$ according to \eqref{eq:girsanov-kl}. Given the sub-Gaussianity assumption, the high-probability bound can be integrated to yield the expected error bound. For $N$ samples, Lemma~\ref{lemma:montecarlo_score_error} implies that,
with probability at least $1-2\delta$,
\begin{equation}
\mathbb{E}_{\pi^0}
\int_0^K
\Big\|
\sqrt{\tfrac{d\sigma^2(\tau)}{d\tau}}
\big(
\tilde{\phi}_A - \hat{\phi}_A
\big)
\Big\|^2
\, d\tau
\le
\frac{c_1 c_2 K \log(1/\delta)}{N},
\end{equation}
where we have nondegenerate noise 
\[
c_1 \coloneqq \sup_{\tau \in [0,K],\, a \in \mathcal{A}} c(s,a^\tau),
\qquad
c_2 \coloneqq
\sup_{\tau \in [0,K]} \tfrac{d\sigma^2(\tau)}{d\tau}\Big.
\]

\vspace{3pt}
\noindent
Finally, as $\mathcal{L}_A(s,a,\lambda) \to \mathcal{L}(s,a,\lambda^*)$,
the second term in~\eqref{eq:path_kl_decomposition} vanishes.
Consequently, the discrepancy between the approximate diffusion policy
$\tilde{\pi}_A^0$ and the optimal policy $\pi^*(a^0|s)$ is dominated by
the Monte Carlo path integration error, which decays at the rate
$\mathcal{O}\!\big(\tfrac{\log(1/\delta)}{N}\big)$.
\hfill\qedsymbol

\end{prooftheoremwithframe}

\vspace{20pt}

To complete the analysis, we next relate the discrepancy from the target Boltzmann distribution to the approximation gap $\mathcal{L}_A(s,a,\lambda) - \mathcal{L}(s,a,\lambda^*)$, showing how deviations in the augmented Lagrangian translate into distributional mismatch.

\begin{corollary}[Stability of score under Lagrangian approximation]
\label{thm:reward_score_gap}
Let the target score function at time $\tau$ with optimal dual variable
$\lambda^*$ be defined as
\begin{equation}
\label{eq:true_score}
\phi_*(s,a^\tau,\tau)
=
\mathbb{E}_{a^{0|\tau} \sim \mathcal{N}(a^{\tau}, \sigma^2(\tau) I)}
\bigg[
-
\frac{
\exp\!\left(- \frac{\mathcal{L}(s,a^{0|\tau},\lambda^*)}{\beta}\right)
}{
\mathbb{E}_{a^{0|\tau}}
\!\left[
\exp\!\left(- \frac{\mathcal{L}(s,a^{0|\tau},\lambda^*)}{\beta}\right)
\right]
}
\frac{ \nabla_a \mathcal{L}(s,a^{0|\tau},\lambda^*) }{\beta}
\bigg].
\end{equation}

Let $\mathcal{L}_A(s,a,\lambda)$ be an approximate augmented Lagrangian.
Assume that there exists $\varepsilon>0$ such that
\begin{equation}
\label{eq:reward_gap}
\|\mathcal{L}_A(s,a,\lambda) - \mathcal{L}(s,a,\lambda^*)\|_\infty
\le \varepsilon,
\end{equation}
and, in addition, that the approximation is stable at the gradient level:
\begin{equation}
\label{eq:grad_gap}
\|\nabla_a \mathcal{L}_A(s,a,\lambda)
-
\nabla_a \mathcal{L}(s,a,\lambda^*)\|_\infty
\le \varepsilon .
\end{equation}
Assume further that the gradients are uniformly bounded,
\[
\|\nabla_a \mathcal{L}_A(s,a,\lambda)\|,
\;
\|\nabla_a \mathcal{L}(s,a,\lambda^*)\|
\le G .
\]

Let $\hat{\phi}_A(s,a^\tau,\tau)$ denote the score induced by
$\mathcal{L}_A$ via the same definition as~\eqref{eq:true_score}.
Then there exist constants $C_1,C_2>0$, independent of $\beta$ and
$\varepsilon$, such that
\begin{equation}
\label{eq:score_gap_bound}
\left\|
\phi_*(s,a^\tau,\tau)
-
\hat{\phi}_A(s,a^\tau,\tau)
\right\|^2
\le
C_1^2 \frac{\varepsilon^2}{\beta^2} +
2C_1C_2 \frac{\varepsilon^2}{\beta^3}
+
C_2^2 \frac{\varepsilon^2}{\beta^4}.
\end{equation}

Consequently, the discrepancy from the target Boltzmann distribution
induced by the score mismatch vanishes as $\varepsilon \to 0$, with a
rate explicitly controlled by the temperature parameter $\beta$.
\end{corollary}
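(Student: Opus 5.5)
We write the two scores as self-normalized weighted expectations under the common Gaussian kernel $\mathcal{N}(a^\tau,\sigma^2(\tau)I)$ and decompose their difference into a \emph{gradient-mismatch} part and a \emph{weight-mismatch} part. Set
\[
u_*(a)=e^{-\mathcal{L}(s,a,\lambda^*)/\beta},\qquad u_A(a)=e^{-\mathcal{L}_A(s,a,\lambda)/\beta},
\]
with normalized weights $w_*=u_*/\mathbb{E}[u_*]$ and $w_A=u_A/\mathbb{E}[u_A]$. Here all expectations are over $a^{0|\tau}\sim\mathcal{N}(a^\tau,\sigma^2(\tau)I)$. Then
\[
\beta\big(\hat\phi_A-\phi_*\big)=-\mathbb{E}\big[w_A(\nabla_a\mathcal{L}_A-\nabla_a\mathcal{L})\big]-\mathbb{E}\big[(w_A-w_*)\nabla_a\mathcal{L}\big].
\]

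The first term is controlled directly. Since $w_A\ge 0$ and $\mathbb{E}[w_A]=1$, the gradient condition \eqref{eq:grad_gap} bounds its norm by $\varepsilon$. This produces the $C_1\varepsilon/\beta$ contribution after dividing by $\beta$.

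For the second term, we use the value bound \eqref{eq:reward_gap}. It gives the pointwise relation $u_A/u_*=e^{-(\mathcal{L}_A-\mathcal{L})/\beta}\in[e^{-\varepsilon/\beta},e^{\varepsilon/\beta}]$. The same range holds for the ratio of normalizers $\mathbb{E}[u_A]/\mathbb{E}[u_*]$. Consequently
\[
\frac{w_A}{w_*}\in[e^{-2\varepsilon/\beta},e^{2\varepsilon/\beta}],\qquad |w_A-w_*|\le w_*\big(e^{2\varepsilon/\beta}-1\big).
\]
Combining this with the uniform gradient bound $G$ gives
\[
\big\|\mathbb{E}\big[(w_A-w_*)\nabla_a\mathcal{L}\big]\big\|\le G\big(e^{2\varepsilon/\beta}-1\big).
\]
Linearizing, $e^{x}-1\le x e^{x}$, which in the regime $\varepsilon\lesssim\beta$ is at most a constant times $\varepsilon/\beta$. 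After the overall factor $1/\beta$, the second term therefore contributes $C_2\varepsilon/\beta^2$, with $C_2$ proportional to $G$.

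Combining the two parts with the triangle inequality gives
\[
\|\phi_*-\hat\phi_A\|\le C_1\frac{\varepsilon}{\beta}+C_2\frac{\varepsilon}{\beta^2}.
\]
Squaring this exactly produces \eqref{eq:score_gap_bound}. The cross term $2C_1C_2\varepsilon^2/\beta^3$ arises from $(x+y)^2=x^2+2xy+y^2$, so equality in form holds.

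The vanishing of the distributional discrepancy then follows by inserting this bound into the Girsanov representation (Lemma~\ref{lemma:Girsanov}), as in the second term of \eqref{eq:path_kl_decomposition}. Integrating $\tfrac{d\sigma^2}{d\tau}$ over $[0,K]$ contributes only a bounded constant.

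The main obstacle is keeping $C_1$ and $C_2$ independent of $\beta$ and $\varepsilon$ in the weight-mismatch step. The exponential factor $e^{2\varepsilon/\beta}$ must be absorbed into the constant, which is legitimate only under an assumption such as $\varepsilon\le\beta$, or in the asymptotic regime $\varepsilon\to 0$. I would state this smallness condition explicitly, or alternatively use a mean-value argument on the log-partition function. Such an argument would give the cleaner bound $\|\mathbb{E}[(w_A-w_*)g]\|\le 2G\varepsilon/\beta\cdot\sup_t e^{2t\varepsilon/\beta}$ along the interpolation $\mathcal{L}+t(\mathcal{L}_A-\mathcal{L})$.
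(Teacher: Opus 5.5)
Your proposal is correct and follows the paper's proof essentially step for step. It uses the same split into a gradient-mismatch term of order $\varepsilon/\beta$ and a weight-mismatch term of order $G\varepsilon/\beta^2$, followed by the triangle inequality and squaring. Your justification via $w_A\ge 0$, $\mathbb{E}[w_A]=1$ is actually the right one, since the paper's claim $0\le w_{\mathcal{L}_A}\le 1$ is false for these normalized weights. Your ratio bound $w_A/w_*\in[e^{-2\varepsilon/\beta},e^{2\varepsilon/\beta}]$ also makes explicit the paper's vague appeal to ``softmax stability''.

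The smallness condition you flag is unnecessary. Since $\mathbb{E}|w_A-w_*|\le 2$, the weight term is always at most $2G$, and this is $\le 2G\varepsilon/\beta$ whenever $\varepsilon\ge\beta$, so $C_2=2e^2G$ works in every regime. Your interpolation argument also needs no exponential factor: the $t$-derivative of $\mathbb{E}[w_t g]$ is $-\beta^{-1}$ times a covariance under the tilted measure, which is bounded by $G\varepsilon$ uniformly in $t$.
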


\textit{Proof.}  Recall that the target score function with the optimal dual variable $\lambda^*$ is defined as
\begin{equation}
\label{eq:true_score_app}
\phi_*(s,a^\tau,\tau)
=
\mathbb{E}_{a^{0|\tau} \sim \mathcal{N}(a^{\tau}, \sigma^2(\tau) I)}
\bigg[
-
\frac{
\exp\!\left(- \frac{\mathcal{L}(s,a^{0|\tau},\lambda^*)}{\beta}\right)
}{
\mathbb{E}\!\left[
\exp\!\left(- \frac{\mathcal{L}(s,a^{0|\tau},\lambda^*)}{\beta}\right)
\right]
}
\frac{ \nabla_a \mathcal{L}(s,a^{0|\tau},\lambda^*) }{\beta}
\bigg].
\end{equation}

Let $\hat{\phi}_A(s,a^\tau,\tau)$ denote the score induced by the approximate augmented Lagrangian
$\mathcal{L}_A(s,a,\lambda)$ via the same definition.

\textbf{Step 1: Convergence of the augmented Lagrangian.}
Since $\lambda^*$ is the optimal dual variable, by the KKT conditions the augmented Lagrangian coincides with the standard Lagrangian at optimality. Hence, as $\lambda \to \lambda^*$,
\begin{equation}
\mathcal{L}_A(s,a,\lambda)
\to
\mathcal{L}(s,a,\lambda^*).
\end{equation}
By assumption, the approximation error satisfies
\begin{equation}
\label{eq:reward_gap_app}
\big|
\mathcal{L}_A(s,a,\lambda)
-
\mathcal{L}(s,a,\lambda^*)
\big|
\le \varepsilon,
\quad \text{for all} \  (s,a) \in \mathcal{S}\times\mathcal{A}.
\end{equation}

\textbf{Step 2: Unified score representation.}
For any energy induced by Lagrangian $\mathcal{L}(s,a,\lambda^*)$, define the normalized probability density as 
\begin{equation}
w_{\mathcal{L}}(a^{0|\tau})
=
\frac{
            \exp\!\left(- \frac{\mathcal{L}(s,a^{0|\tau},\lambda^*)}{\beta}\right)
        }{
            \mathbb{E}_{a^{0|\tau} \sim \mathcal{N}(a^{\tau}, \sigma^2(\tau) I)}
            \big[
                \exp\!\left(- \frac{\mathcal{L}(s,a^{0|\tau},\lambda^*)}{\beta}\right)
            \big]
        }.
\end{equation}
Then the score can be written as
\begin{equation}
\phi_*(s,a^\tau,\tau)
=
\mathbb{E}_{a^{0|\tau}}
\Big[
-
w_{\mathcal{L}}(a^{0|\tau})
\frac{\nabla_a \mathcal{L}(s,a^{0|\tau}, \lambda^*)}{\beta}
\Big].
\end{equation}
In particular,
\begin{equation}
\phi_*=\phi_{\mathcal{L}(\cdot,\lambda^*)},
\qquad
\hat{\phi}_A=\phi_{\mathcal{L}_A(\cdot,\lambda)}.
\end{equation}

\textbf{Step 3: Error decomposition.}
We decompose the difference as
\begin{equation}
\phi_* - \hat{\phi}_A
=
\mathbb{E}_{a^{0|\tau}}
\bigg[
\Big(w_{\mathcal{L}_A}-w_{\mathcal{L}}\Big)
\frac{\nabla_a \mathcal{L}}{\beta}
+
w_{\mathcal{L}_A}
\frac{\nabla_a \mathcal{L}_A - \nabla_a \mathcal{L}}{\beta}
\bigg].
\end{equation}

\textbf{Step 4: Bounding the gradient discrepancy term.}
By differentiability and assumption~\eqref{eq:grad_gap}, there exists a constant $C>0$ such that
\begin{equation}
\|
\nabla_a \mathcal{L}(s,a,\lambda^*)
-
\nabla_a \mathcal{L}_A(s,a,\lambda)
\|
\le C\,\varepsilon .
\end{equation}
Since $0 \le w_{\mathcal{L}_A} \le 1$, we obtain
\begin{equation}
\label{eq:grad_term_bound}
\mathrm{(II)}
\le
\mathbb{E}\!\left[
\frac{C\,\varepsilon}{\beta}
\right]
=
C_1  \frac{\varepsilon}{\beta} ,
\end{equation}
where $C_1>0$ is a constant.

\textbf{Step 5: Bounding the probability density discrepancy.}
From~\eqref{eq:reward_gap_app}, we have
\begin{equation}
\exp\!\left(-\frac{\mathcal{L}_A}{\beta}\right)
=
\exp\!\left(-\frac{\mathcal{L}}{\beta}\right)
\left(
1+\mathcal{O}\!\left(\frac{\varepsilon}{\beta}\right)
\right).
\end{equation}
By standard softmax stability arguments~\citep{mitzenmacher2017probability}, this implies
\begin{equation}
\|
w_{\mathcal{L}_A}
-
w_{\mathcal{L}}
\|
\le
C\,\frac{\varepsilon}{\beta}.
\end{equation}
Using the bounded gradient assumption
$\|\nabla_a \mathcal{L}\|\le G$, we obtain
\begin{equation}
\label{eq:weight_term_bound}
\mathrm{(I)}
\le
\mathbb{E}\!\left[
\frac{G}{\beta}
\cdot
\frac{\varepsilon}{\beta}
\right]
=
C_2 \frac{\varepsilon}{\beta^2} ,
\end{equation}
where $C_2>0$ is a constant.

\textbf{Step 6: Final bound.}
Combining~\eqref{eq:grad_term_bound} and~\eqref{eq:weight_term_bound} yields
\begin{equation}
\left\|
\phi_*(s,a^\tau,\tau)
-
\hat{\phi}_A(s,a^\tau,\tau)
\right\|^2
\le
C_1^2 \frac{\varepsilon^2}{\beta^2} +
2C_1C_2 \frac{\varepsilon^2}{\beta^3}
+
C_2^2 \frac{\varepsilon^2}{\beta^4}.
\end{equation}
This completes the proof.

\begin{remark}
As established in the preceding analysis, the discrepancy from the target Boltzmann distribution, $D_{\mathrm{KL}}\!\big(\pi^*(a^0|s) \,\|\, \pi_A^0(a^0|s)\big),$ is directly controlled by the approximation gap between the augmented Lagrangian
$\mathcal{L}_A(s,a,\lambda)$ and the true Lagrangian $\mathcal{L}(s,a,\lambda^*)$.
In particular, as $\varepsilon \to 0$, this discrepancy converges to zero at the rate $\mathcal{O}\!\big( \frac{\varepsilon^2}{\beta^2} +
\frac{\varepsilon^2}{\beta^3}
+
\frac{\varepsilon^2}{\beta^4} \big).$
\end{remark}

\clearpage
\section{Implementation and Pseudo Code} \label{append:algorithm}
Algorithm~\ref{alg:algd} presents the augmented Lagrangian-guided diffusion (ALGD) algorithm. This framework can also be applied to the standard Lagrangian-guided diffusion (LGD) method; the only required modification is to replace the augmented Lagrangian term $\mathcal{L}_A$ with the standard Lagrangian $\mathcal{L}$ throughout Algorithm~\ref{alg:algd}.

Algorithm~\ref{alg:algd} summarizes the augmented Lagrangian-guided diffusion (ALGD) framework. At each environment interaction step, actions are sampled via a reverse-time SDE conditioned on the current state. Starting from Gaussian noise, the action is iteratively denoised using the learned score network, yielding a final action $a_t^0$ that is executed in the environment.

During training, the Q-value critics and cost critics are updated using standard temporal-difference learning. The reward critics are trained with a clipped double-Q objective, while an ensemble of cost critics is used to estimate the expected constraint cost, improving robustness and reducing variance.

The score network is updated by matching its output to a Monte Carlo estimate of the true score induced by the augmented Lagrangian objective. Specifically, the augmented Lagrangian combines the reward maximization objective with a penalty term that enforces the cost constraint via a dynamically updated Lagrange multiplier. Gradients of this augmented objective with respect to the action are used to construct a target score, and the score network is trained by minimizing a mean-squared error loss. Finally, the Lagrange multiplier is updated using projected gradient ascent to ensure constraint satisfaction.

\paragraph{Numerical Stability in Score Estimation.}
In our implementation, we address the potential numerical instability when computing the Boltzmann weights $w_i$ defined in \eqref{eq:monte_carlo}. Due to the quadratic penalty term $\rho$ in the augmented Lagrangian $\mathcal{L}_A$, the energy values can reach a large value. To mitigate this, we employ the \textit{Log-Sum-Exp} trick. Specifically, for a batch of $N$ Monte Carlo samples $\{a^{0|\tau,(i)}\}_{i=1}^N$, we define the intermediate energy terms as:
\begin{equation}
    E_i = -\frac{\mathcal{L}_A(s, a^{0|\tau,(i)}, \lambda)}{\beta}.
\end{equation}
The weights are then computed as:
\begin{equation}
    w_i = \frac{\exp(E_i - \max_{j} E_j)}{\sum_{k=1}^N \exp(E_k - \max_{j} E_j)}.
\end{equation}
This formulation ensures that the argument of the exponential function is at most $0$, thereby preventing overflow while preserving the relative proportions of the weights. This stabilized weighting mechanism is crucial for maintaining a reliable score field $\phi_A(s, a^\tau, \tau)$ during the early stages of training when the dual variable $\lambda$ or the penalty $\rho$ may induce sharp energy transitions.

\paragraph{Amortized policy generation.}
Although the Monte Carlo estimation in the score update involves sampling multiple action trajectories, this computation is only performed during \emph{training} of the score network. At interaction time, policy generation is fully amortized: actions are produced by a diffusion sampling pass using the learned score network, without requiring any Monte Carlo sampling. As a result, environment interaction incurs no additional computational overhead.

\begin{algorithm}[h]
\caption{Augmented Lagrangian-Guided Diffusion (ALGD)}
\label{alg:algd}
\begin{algorithmic}[1]

\Require
Replay buffer $\mathcal{B}$; double Q-networks $\{Q^{\phi_1}, Q^{\phi_2}\}$;
ensemble cost critics $\{Q_c^{\psi_i}\}_{i=1}^M$;
score network $\phi_\theta$;
Lagrange multiplier $\lambda$;
diffusion steps $K$;
Monte Carlo sample number $N$;
noise schedule $\sigma(\tau)$;
temperature $\beta$

\State Initialize parameters $\phi_1, \phi_2, \{\psi_i\}_{i=1}^M, \theta$, and $\lambda \ge 0$

\For{each environment step}
    \State Observe state $s_t$
    \State Sample noisy action $a_t^K \sim \mathcal{N}(0, \sigma^2(K)I)$

    \For{$\tau = K$ \textbf{down to} $1$}
        \State Diffusion sampling via score network $\phi_\theta(s_t, a_t^\tau, \tau)$
        \[
        a_t^{\tau-1} =
        a_t^\tau
        + \frac{d\sigma^2(\tau)}{d\tau}\,
        \phi_\theta(s_t,a_t^\tau,\tau)
        + \sqrt{\frac{d\sigma^2(\tau)}{d\tau}}\,\epsilon,
        \quad \epsilon \sim \mathcal{N}(0,I)
        \]
    \EndFor

    \State Execute $a_t^0$, observe $(r_t, c_t, s_{t+1})$
    \State Store $(s_t,a_t^0,r_t,c_t,s_{t+1})$ in $\mathcal{B}$
\EndFor

\For{each gradient update step}
    \State Sample batch $(s,a,r,c,s') \sim \mathcal{B}$
    \State Sample next action $a' \sim \pi_\theta(\cdot|s')$ via diffusion sampling

    \State Compute target Q-value
    \[
    y_Q = r + \gamma \min_{j=1,2} Q^{\phi'_j}(s',a')
    \]

    \State Compute target cost value
    \[
    y_{Q_c} = c + \gamma \frac{1}{M} \sum_{i=1}^M Q_c^{\psi'_i}(s',a')
    \]

    \State Update critics
    \[
    \phi_j \leftarrow \phi_j - \eta_Q \nabla_{\phi_j}
    (Q^{\phi_j}(s,a)-y_Q)^2, \quad j=1,2
    \]
    \[
    \psi_i \leftarrow \psi_i - \eta_Q \nabla_{\psi_i}
    (Q_c^{\psi_i}(s,a)-y_{Q_c})^2
    \]

    \State Compute ensemble cost critic
    \[
    \bar Q_c(s,a) = \frac{1}{M}\sum_{i=1}^M Q_c^{\psi_i}(s,a)
    \]

    \State \textbf{Score network update}

    \State Monte Carlo target score estimation with the \textit{Log-Sum-Exp} trick
    \[
    \phi_A(s,a^\tau,\tau)
    \approx
    \sum_{i=1}^N
    -\frac{w_i}{\beta}
    \nabla_a \mathcal{L}_A(s,a^{0|\tau,(i)},\lambda),
    \quad
    a^{0|\tau,(i)} \sim \mathcal{N}(a^\tau, \sigma^2(\tau))
    \]

    \State Minimize score loss
    \[
    \mathcal{L}_{\text{score}}(\theta)
    =
    \mathbb{E}
    \left[
    \|\phi_\theta(s,a^\tau,\tau)-\phi_A(s,a^\tau,\tau)\|^2
    \right]
    \]

    \State Update Lagrange multiplier
    \[
    \lambda \leftarrow
    [\lambda + \eta_\lambda(\bar Q_c(s,a)-h)]_+
    \]
\EndFor

\end{algorithmic}
\end{algorithm}

\clearpage
\section{More Experimental Results} \label{append:experiments}

\subsection{Experimental Settings}

We evaluate our method on a suite of safe RL benchmarks that require agents to achieve task objectives while strictly respecting safety constraints. As shown in Figure~\ref{fig:exp_illu}, the environments span both structured manipulation/navigation scenarios and complex control tasks, providing a comprehensive testbed for safe RL algorithms. To accelerate training, we adopted the modifications made to the Safety-Gym task by \citep{2022safetygym_env}. Since all algorithms are evaluated on the same set of tasks, the comparisons remain valid.

\paragraph{Safe-Gym Environments.}
The Safe-Gym tasks, including \textit{Point Button}, \textit{Car Button}, and \textit{Point Push}, impose explicit state-dependent safety constraints such as obstacle avoidance, forbidden regions, and collision restrictions. The agent must complete goal-oriented objectives while avoiding unsafe states throughout the trajectory. Constraint violations incur safety costs or terminate the episode, discouraging unsafe exploration and emphasizing safety during both learning and execution.

\paragraph{Velocity-Constrained Control Tasks.}
We further consider MuJoCo-based tasks with explicit safety constraints on system behavior, where agents are required to satisfy predefined safe velocity limits during task execution.

\begin{figure}[h]
    \centering
    \includegraphics[width=0.95\linewidth]{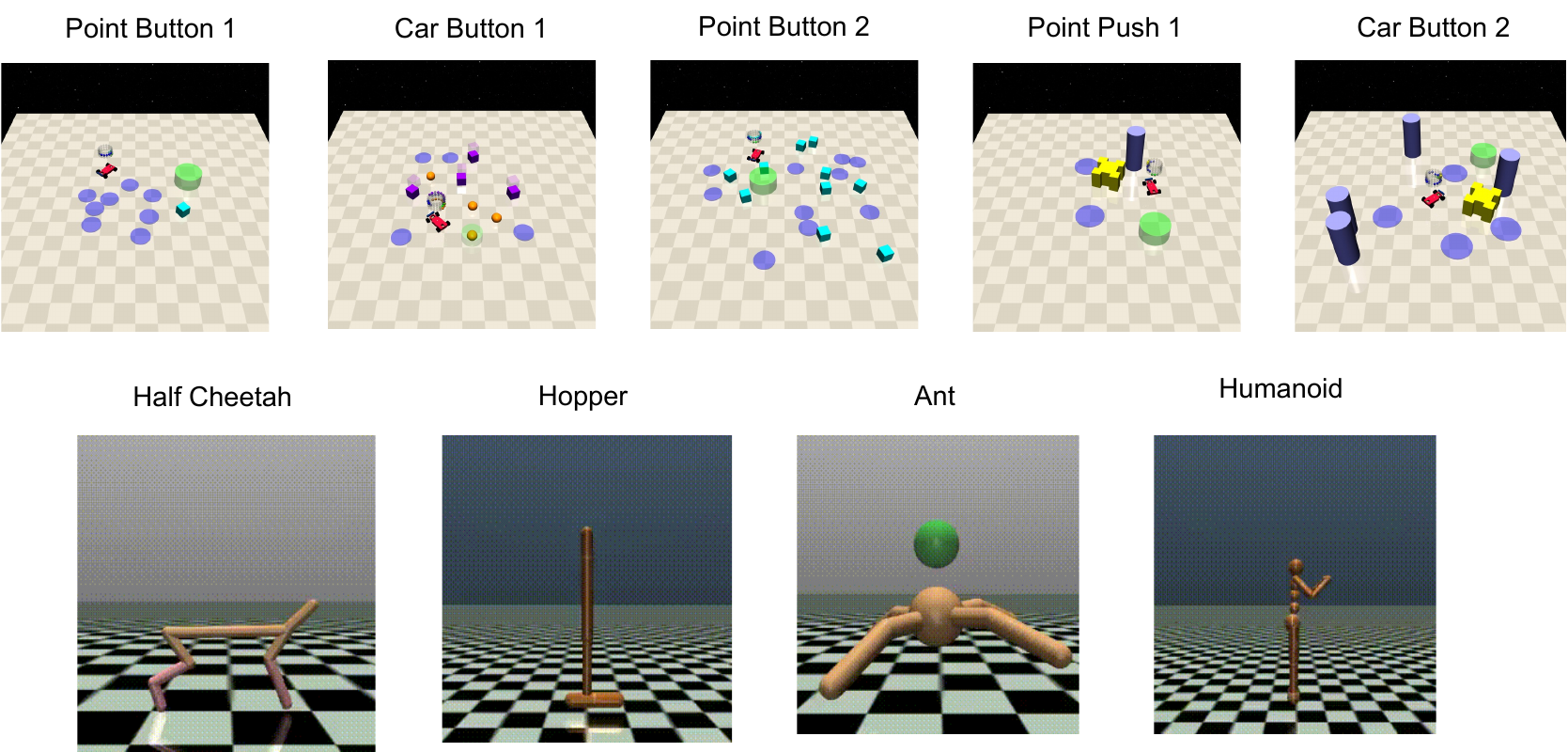}
    \caption{Task environments used in our experiments. Top row: Safe-Gym manipulation and navigation tasks, including Point Button, Car Button, and Point Push, which require the agent to accomplish goal-oriented behaviors while satisfying safety constraints such as obstacle avoidance and region constraints. Bottom row: Velocity-constrained MuJoCo locomotion tasks, including HalfCheetah, Hopper, Ant, and Humanoid, where agents must learn stable and efficient locomotion policies under explicit velocity limits and safety-related constraints.}
    \label{fig:exp_illu}
\end{figure}

\clearpage
\subsection{Network Architectures} \label{append:nn_architectures}

\begin{table}[ht]
\centering
\caption{ALGD Structure Summary.}
\label{tab:algd-arch}
\renewcommand{\arraystretch}{1.15}
\setlength{\tabcolsep}{6pt}
\begin{tabular}{l l p{9.2cm} l}
\toprule
\textbf{Module} & \textbf{Sub-Module} & \textbf{Structure/Computation} & \textbf{Output Dimension} \\
\midrule

\multirow{6}{*}{\makecell[l]{Diffusion\\Policy $\pi_\theta$}}
& Time Emb. 
& $\texttt{Embedding}(K,16)$ & $16$ \\

& Score Network
& Input concatenation $[s, x_k, e(k)]$ \\
&& $\texttt{Linear}(d_s+d_a+16\rightarrow128)$ + ReLU & $128$ \\
&& $\texttt{Linear}(128\rightarrow128)$ + ReLU & $128$ \\
&& $\texttt{Linear}(128\rightarrow128)$ + ReLU & $128$ \\
&& $\texttt{Linear}(128\rightarrow d_a)$ & $d_a$ \\

& Schedule
& $\sigma_t \sim \texttt{loglinspace}(10^{-4},\, 10^{-1} )$, 
&  \\

\midrule

\multirow{3}{*}{\makecell[l]{Reward Critic\\$Q_\psi(s,a)$}}
& $Q_1$
& $\texttt{Linear}(d_s+d_a\rightarrow256)$ + ReLU & $256$ \\
&& $\texttt{Linear}(256\rightarrow256)$ + ReLU & $256$ \\
&& $\texttt{Linear}(256\rightarrow1)$ & $1$ \\

& $Q_2$
& same with $Q_1$ (Double Q structure)
& $1$ \\

& Output
& Forward Output $(Q_1,Q_2)$
& $(1,1)$ \\

\midrule

\multirow{4}{*}{\makecell[l]{Safety Critic\\Ensemble $\bar{Q}_c\xi$}}
& Ensemble Size
& $M=\texttt{qc\_ens\_size}$ (parallel ensemble)
& $M$ \\

& Layer 1
& \texttt{EnsembleFC}$(d_s+d_a\rightarrow256)$ + SiLU, weight\_decay=$3\times10^{-5}$
& $M\times256$ \\

& Layer 2
& \texttt{EnsembleFC}$(256\rightarrow256)$ + SiLU, weight\_decay=$6\times10^{-5}$
& $M\times256$ \\

& Layer 3
& \texttt{EnsembleFC}$(256\rightarrow1)$, weight\_decay=$10^{-4}$
& $M\times1$ \\

\bottomrule
\end{tabular}
\end{table}

\subsection{Training Hyperparameters} \label{append:train_hyperparam}

\begin{table}[ht]
\centering
\caption{Hyperparameters and configuration summary.}
\label{tab:algd-hparams}
\renewcommand{\arraystretch}{1.10}
\setlength{\tabcolsep}{10pt}
\begin{tabular}{l c | l c}
\toprule
\textbf{Hyperparameter} & \textbf{Value} & \textbf{Hyperparameter} & \textbf{Value} \\
\midrule

discount $\gamma$                 & 0.99     & safety discount $\gamma_c$            & 0.99 \\
% target smoothing $\tau$           & 0.005    & critic target update freq.            & 2 \\
reward critic hidden dim         & 256      & safety critic hidden size              & 256 \\
actor/critic lr                   & $3\times10^{-4}$ & safety critic lr                 & $3\times10^{-4}$ \\
replay buffer size                & $10^6$   & num train repeat (per epoch) & 10 \\
Polyak averaging          & 0.005      & frequency & 5 \\
policy train batch size           & 256      & number of seeds & 10 \\
\midrule

% risk factor $k$                   & 1.0      & safety critic ensemble size $M$       & 4 \\
augmented lagrangian $\rho$       & 1.0       & safety critic ensemble size $M$       & 6 \\
\midrule

diffusion steps $K$       & 5        & score network hidden dim                   & 128 \\
time embedding dim                & 16       & Monte Carlo score estimation $N$ & 6 \\
% \midrule

% guidance scale &    &  MC score estimation $N$ & 8 \\
% score loss coef. & 0.1  & actor loss coef. & 1.0 \\
\bottomrule
\end{tabular}
\end{table}

\clearpage

\subsection{Comparisons with On-policy Baselines} \label{append:on-policy_result}

\begin{figure}[h]
    \begin{subfigure}{\linewidth}
        \centering
        \includegraphics[width=0.9\linewidth]{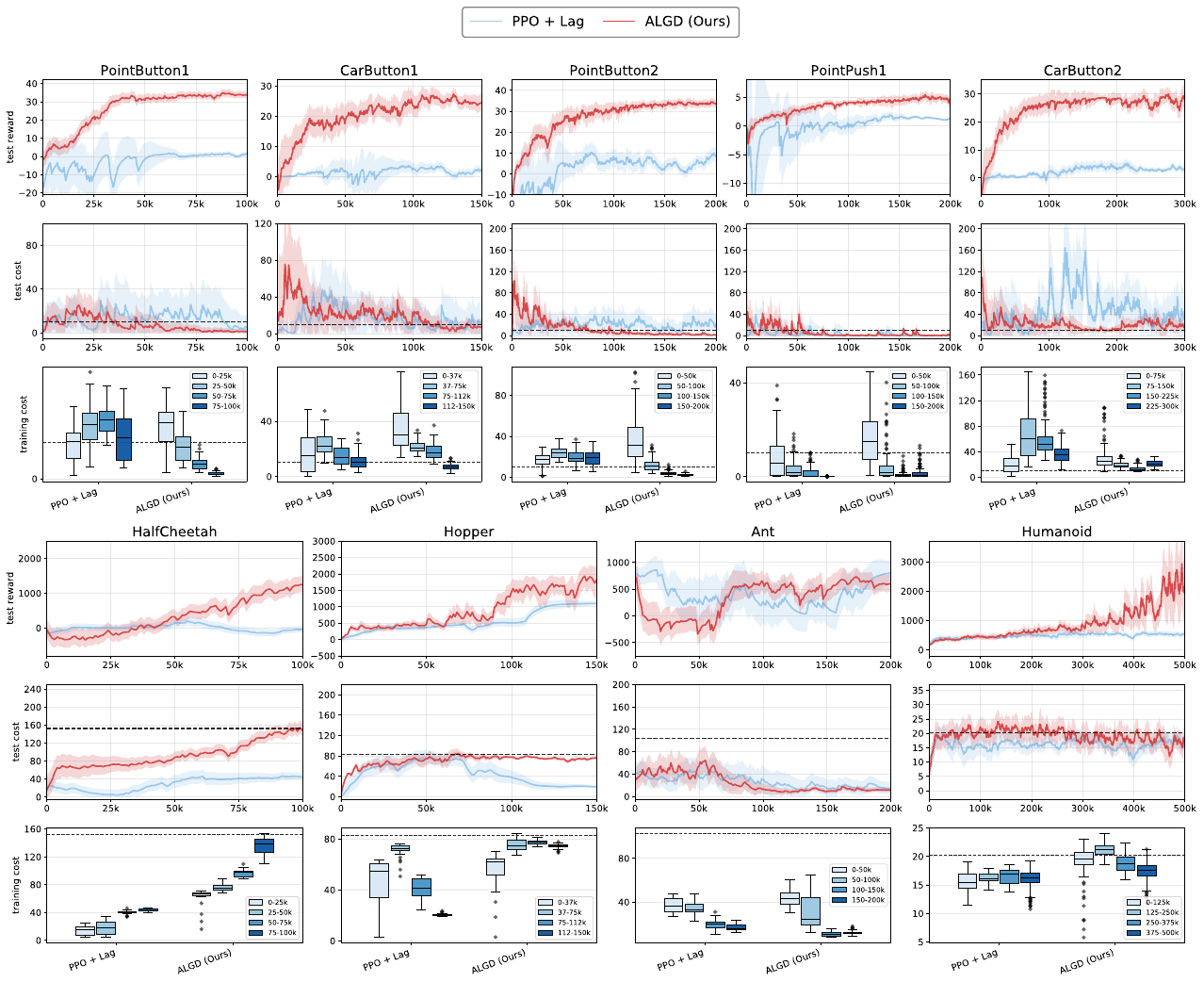}
        \label{fig:mujoco}
    \end{subfigure}

    \caption{Comparisons with on-policy baselines on Safety-Gym (top half) and velocity-constrained MuJoCo (bottom half). In general, compared with the on-policy baselines, ALGD achieves competitive returns while demonstrating improved sample efficiency and stronger safety performance.}
    \label{fig:on_policy_overall_safetygym_mujoco}
\end{figure}

\clearpage
\subsection{More Ablation Study} \label{append:more_ablation}

\paragraph{Monte Carlo Score Estimation.} Figure~\ref{fig:ablation_full_1} presents the full results of Monte Carlo score estimation.

\begin{figure}[ht]
    \centering
    \includegraphics[width=0.8\linewidth]{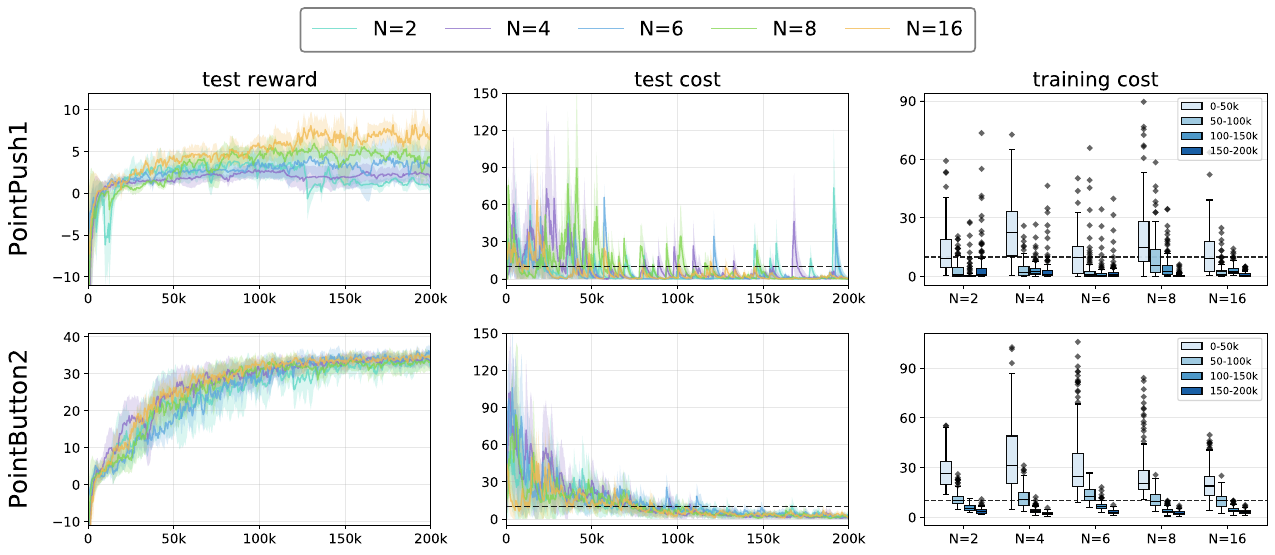}
    \caption{Ablation study on Monte Carlo score estimation.
We evaluate different numbers of Monte Carlo samples 
$N\in \{2,4,6,8,16 \}$ on PointPush1 (top row) and PointButton2 (bottom row). From left to right, we report the test reward, test cost, and training cost over the course of training. As $N$ increases, the learned policies consistently achieve higher test reward and lower test cost, while also exhibiting reduced training cost and variance. This monotonic improvement empirically validates the theoretical result: using more Monte Carlo samples yields a more accurate score estimation, leading to better performance–cost trade-offs and more stable learning.}
    \label{fig:ablation_full_1}
\end{figure}

\paragraph{Ensemble Cost Critics.}
In the ablation study (shown in Figure~\ref{fig:ablation_full_2}), we observe that employing an ensemble of cost critics leads to more stable cost estimation, with substantially reduced variance throughout training.
This improved stability translates into lower accumulated training cost and more reliable policy performance at test time.
These results validate the effectiveness of our ensemble design: averaging over multiple critics provides a more accurate estimate of the augmented Lagrangian, bringing it closer to the underlying ground-truth objective.
Importantly, the additional computational overhead introduced by the ensemble remains modest in practice.
Since the critics can be evaluated in parallel on GPU architectures, increasing the ensemble size incurs only a marginal increase in per-step computation time, as confirmed by the runtime analysis in Table~\ref{tab:ens_time_ablation}. 

\textit{Notably, our algorithm already exhibits stable learning and safe policy behavior even without the ensemble ($M=1$); increasing the ensemble size does not alter the underlying energy landscape, but instead strengthens the approach by allowing $\bar{Q}_c$ to provide a more accurate estimate of $\nabla_a \mathcal{L}_A(s, a, \lambda)$, thereby improving gradient quality and overall algorithm performance.}

\begin{figure}[ht]
    \centering
    \includegraphics[width=0.8\linewidth]{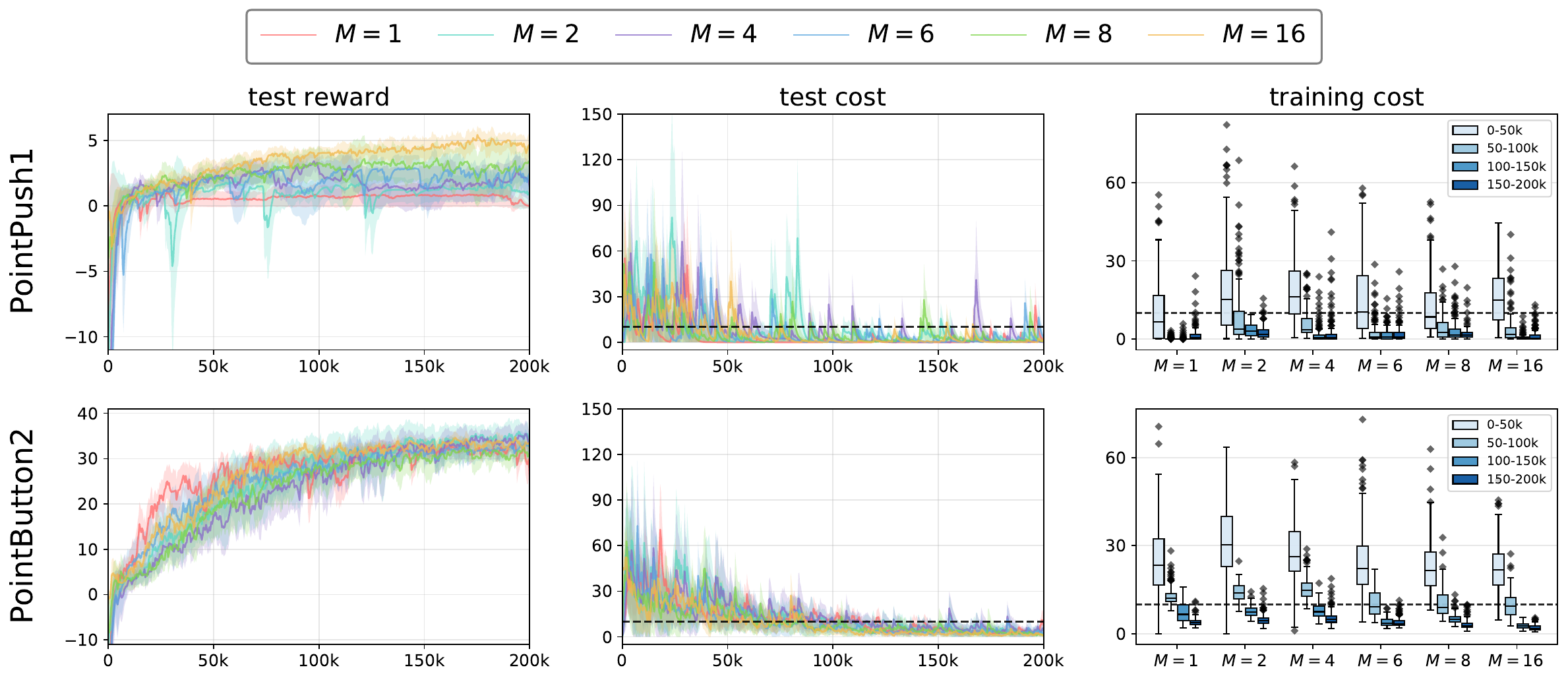}
    \caption{Ablation Studies of critic ensemble size $M$. We study the effect of varying the critic ensemble size $M\in \{1, 2,4,6,8,16\}$ on PointPush1 (top row) and PointButton2 (bottom row). From left to right, the plots show test reward, test cost, and training cost throughout training. Increasing $M$ consistently improves performance, yielding higher test rewards and lower test costs, while also reducing training cost and variance. These results indicate that larger critic ensembles provide more accurate and stable value estimation, leading to safer and more efficient policy learning, in line with our theoretical analysis.}
    \label{fig:ablation_full_2}
\end{figure}

To rule out potential confounding effects, we evaluated the use of cost critic ensembles in the baseline methods, including SAC+Lag and CAL (originally proposed with ensembles). In fact, all baseline results reported in Figure~\ref{fig:overall_safetygym_mujoco} already employ the same cost-critic ensemble size ($M=6$) as ALGD. Despite using the same cost-critic ensemble, these baselines remain consistently outperformed by ALGD across all tasks.

\begin{table}[ht]
\centering
\caption{Per-step GPU computation time (ms) for different critic ensemble sizes $M$, reported as mean $\pm$ standard deviation.}
\small
\label{tab:ens_time_ablation}
\begin{tabular}{lcccccc}
\toprule
\textbf{Task} & \textbf{$M{=}1$} & \textbf{$M{=}2$} & \textbf{$M{=}4$} & \textbf{$M{=}6$} & \textbf{$M{=}8$} & \textbf{$M{=}16$} \\
\midrule
PointPush1 & $4.626 \pm 0.986$ & $4.779 \pm 1.015$ & $4.812 \pm 1.013$ & $4.863 \pm 1.030$ & $4.900 \pm 1.042$ & $5.048 \pm 1.053$ \\
PointButton2 & $4.039 \pm 0.973$ & $4.164 \pm 0.996$ & $4.191 \pm 1.002$ & $4.240 \pm 1.018$ & $4.358 \pm 1.020$ & $4.417 \pm 1.026$ \\
\bottomrule
\end{tabular}
\end{table}

\paragraph{Degree of Convexification.}
We study the effect of the convexification strength $\rho$ in the augmented Lagrangian through an ablation analysis.
Figure~\ref{fig:ablation_full_3} reports the performance under different values of $\rho$, illustrating how the quadratic penalty term influences both optimization stability and exploration behavior.

Overall, we observe that a \emph{moderate} range $\rho \in [0.5,2]$ leads to the most stable learning dynamics.
When $\rho$ is too small, the augmented Lagrangian provides insufficient convexification, and the resulting denoising dynamics remain unstable, leading to high variance in both reward and cost during training.
In contrast, when $\rho$ becomes excessively large, the augmented Lagrangian $\mathcal{L}_A$ attains extreme values.
As a consequence, during Monte Carlo score estimation, the Boltzmann weights $\pi_A(a|s) \propto \exp(-\frac{\mathcal{L}_A(s,a,\lambda)}{\beta})$ concentrate on a very small subset of the action space, causing probability mass to collapse into narrow regions.
This over-concentration significantly reduces effective exploration and degrades learning performance.

These results highlight the importance of choosing an appropriate convexification strength: a moderate $\rho$ strikes a balance between stabilizing the optimization landscape and maintaining sufficient exploration, thereby yielding the best empirical performance.

\begin{figure}[ht]
    \centering
    \includegraphics[width=0.85\linewidth]{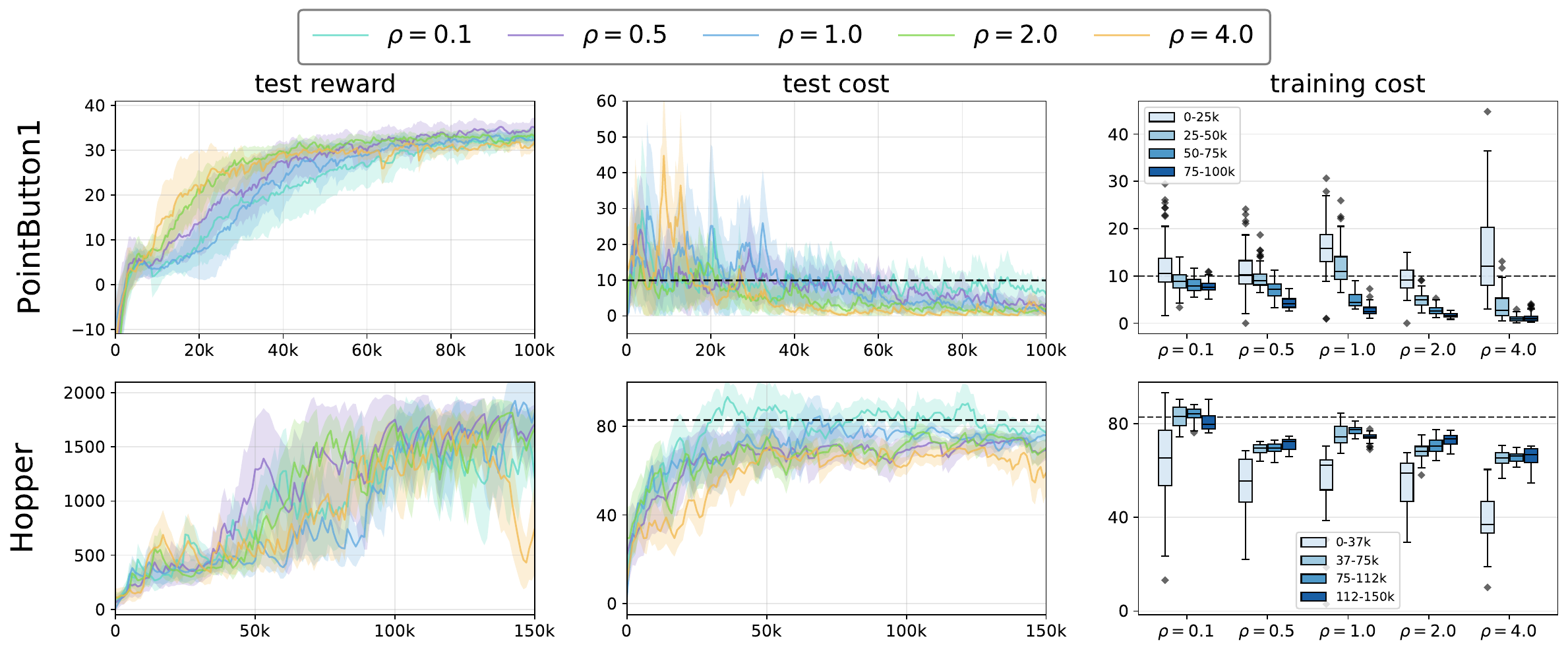}
    \caption{\textbf{Effect of the convexification strength $\rho$ in the augmented Lagrangian.}
We report test reward (left), test cost (middle), and training cost statistics (right) on the PointButton1 task for different values of $\rho$.
A moderate convexification strength leads to stable denoising dynamics, consistent cost satisfaction, and strong reward performance.
When $\rho$ is too small, the augmented Lagrangian provides insufficient convexification, resulting in unstable dynamics and high variance.
In contrast, excessively large $\rho$ yields extreme values of the augmented Lagrangian, causing Monte Carlo reweighting via $\exp(-\frac{\mathcal{L}_A(s,a, \lambda)}{\beta})$ to collapse the probability mass onto narrow regions of the action space, thereby reducing exploration and degrading performance.}
    \label{fig:ablation_full_3}
\end{figure}

\clearpage
\subsection{More Augmented Lagrangian Compare Results} \label{append:full_comparative_analysis}
\begin{figure}[H]
    \centering
    \includegraphics[width=0.95\linewidth]{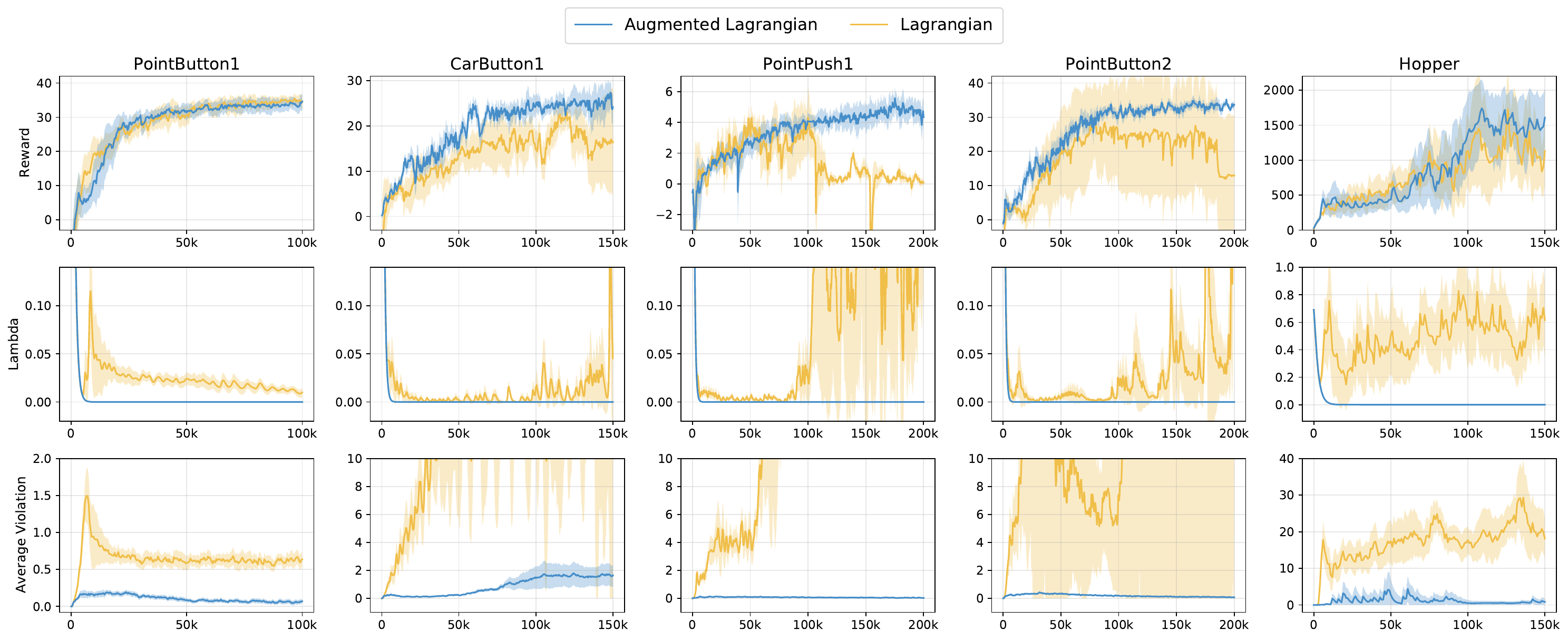}
    \caption{Full comparative analysis of training stability between the standard Lagrangian (yellow) and the augmented Lagrangian (blue). (Top) Evaluation rewards; (Middle) Dual variable $\lambda$ updates; (Bottom) Average constraint violation (calculated as $\max(0,c(s)-h)$). The augmented formulation directly addresses the oscillation of dual variables (\textcolor{cyan}{L2}) and the instability of the induced Boltzmann distribution (\textcolor{cyan}{L3}) by regularizing the local curvature of the energy landscape. This results in more stable denoising dynamics and dual updating, allowing the policy to reach the safety threshold significantly faster and with zero dual variables.}
    \label{fig:compare_full_auglag_lag}
\end{figure}

\subsection{Wall-Clock Time Comparison} \label{append:time_comparison}

% Requires: \usepackage{booktabs}
\begin{table}[ht]
\centering
\caption{Runtime statistics (mean $\pm$ std, seconds) on Safety Gym and MuJoCo. ALGD incurs a longer training time due to the cost of Monte Carlo–based score estimation during training. However, at evaluation time, the learned score function is directly used, amortizing most of the computational overhead, so the evaluation time remains comparable to other methods.}
\label{tab:runtime_all}
\begin{tabular}{llcc}
\toprule
Environment & Algorithm & Training time (s) & Eval time (s) \\
\midrule
\multirow{6}{*}{Safety-Gym} 
& \textbf{ALGD (Ours)}       & 28.60 $\pm$ 0.26 & 4.924 $\pm$ 0.159 \\
& CAL        & 11.19 $\pm$ 0.35 & 2.914 $\pm$ 0.097 \\
& SAC+Lag    & 10.48 $\pm$ 0.39 & 2.839 $\pm$ 0.176 \\
& SAC+AugLag & 10.06 $\pm$ 0.23 & 2.648 $\pm$ 0.109 \\
& PPO+Lag    & 7.25 $\pm$ 0.29 & 1.915 $\pm$ 0.285 \\
& HJ         & 9.51  $\pm$ 0.29 & 2.369 $\pm$ 0.266 \\
\midrule
\multirow{6}{*}{MuJoCo}
& \textbf{ALGD (Ours)}       & 67.37 $\pm$ 2.60 & 0.651 $\pm$ 0.093 \\
& CAL        & 25.82 $\pm$ 0.60  & 0.656 $\pm$ 0.762 \\
& SAC+Lag    & 23.88 $\pm$ 0.72  & 0.341 $\pm$ 0.166 \\
& SAC+AugLag & 25.32 $\pm$ 1.01 & 0.380 $\pm$ 0.096 \\
& PPO+Lag    & 18.57 $\pm$ 0.16  & 0.280 $\pm$ 0.014 \\
& HJ         & 23.32 $\pm$ 0.83  & 0.620 $\pm$ 0.845 \\
\bottomrule
\end{tabular}
\end{table}

\end{document}